\def\Headings#1#2{\def\ps@mypagestyle{\let\@mkboth\@gobbletwo%
\def\@oddhead{\hfill {\small\scshape #1} \hfill}%
\def\@oddfoot{\hfill \small\rmfamily \thepage \hfill}%
\def\@evenhead{\hfill {\small\scshape #2} \hfill}%
\def\@evenfoot{\hfill \small\rmfamily \thepage \hfill}}%
\pagestyle{mypagestyle}}
\renewcommand\footnoterule{\kern-3\p@ \hrule \@width \textwidth \kern 2\p@}
\renewenvironment{abstract}%
{{\centering\bfseries Abstract\par}\vspace{1ex}%
	\bgroup\leftskip 40pt\rightskip 40pt\small\noindent\ignorespaces}%
{\par\egroup\vskip 1ex}
\newtheorem{theorem}{Theorem}[section]
\newtheorem{corollary}[theorem]{Corollary}
\newtheorem{lemma}[theorem]{Lemma}
\newtheorem{proposition}[theorem]{Proposition}
\theoremstyle{definition}
\newtheorem{assumption}[theorem]{Assumption}
\newtheorem{definition}[theorem]{Definition}
\renewcommand{\epsilon}{\varepsilon}
\newcommand{\B}{\mathbb{B}}
\newcommand{\N}{\mathbb{N}}
\newcommand{\R}{\mathbb{R}}
\newcommand{\IndFct}[1]{\mathbbm{1}_{#1}}
\newcommand{\norm}[1]{\left\|#1\right\|}
\newcommand{\weight}{\omega}
\newcommand{\WNorm}[1]{\left\|#1\right\|_{\weight}}
\newcommand{\sequ}{\mathcal{C}}
\newcommand{\sequW}{\sequ_{\weight}}
\newcommand{\Orb}{\mathcal{O}}
\newcommand{\U}[1]{\mathcal{U}_{#1}}
\newcommand{\Op}[2]{T_{#1}^{#2}}
\newcommand{\FPmap}{\Phi}
\newcommand{\CSMmap}{\Psi}
\newcommand{\timeZero}{\alpha}
\newcommand{\timeOne}{\beta}
\newcommand{\target}{\mathfrak{f}}
\newcommand{\NNskel}{\theta}
\newcommand{\NNskelAlt}{\vartheta}
\newcommand{\NNfct}{\mathcal{N}_{\NNskel}}
\newcommand{\Loss}{\mathcal{L}}
\newcommand{\Grad}{\mathcal{G}}
\newcommand{\J}{\mathcal{J}}
\newcommand{\stepsize}{\gamma}
\newcommand{\SGDstep}[1]{\Theta_{#1}^{\stepsize,\NNskel}}
\newcommand{\SGDstepAlt}[1]{\Theta_{#1}^{\stepsize,\NNskelAlt}}
\newcommand{\SetSaddle}{\mathcal{S}}
\newcommand{\ConvTraj}{\mathcal{T}}
\newcommand{\leaky}{\zeta}
\newcommand{\NNleakyfct}{\mathcal{N}_{\NNskel}^{\leaky}}
\newcommand{\leakyLoss}{\mathcal{L}^{\leaky}}
\newcommand{\leakyGrad}{\mathcal{G}^{\leaky}}
\title{\Large{\bfseries{Gradient descent provably escapes saddle points \\ in the training of shallow ReLU networks}}}
\author{Patrick Cheridito\footnote{Department of Mathematics and RiskLab, ETH Zurich, Switzerland} \qquad Arnulf Jentzen\footnote{School of Data Science and Shenzhen Research Institute of Big Data, The Chinese University of Hong Kong, Shenzhen (CUHK-Shenzhen), China} \footnote{Applied Mathematics: Institute for Analysis and Numerics, University of M{\"u}nster, Germany} \qquad Florian Rossmannek$^*$\footnote{School of Physical and Mathematical Sciences, Nanyang Technological University, Singapore}
}
\date{}
\begin{document}

\maketitle

\begin{abstract}
	Dynamical systems theory has recently been applied in optimization to prove that gradient descent algorithms bypass so-called strict saddle points of the loss function.
However, in many modern machine learning applications, the required regularity conditions are not satisfied.
In this paper, we prove a variant of the relevant dynamical systems result, a center-stable manifold theorem, in which we relax some of the regularity requirements.
We explore its relevance for various machine learning tasks, with a particular focus on shallow rectified linear unit (ReLU) and leaky ReLU networks with scalar input.
Building on a detailed examination of critical points of the square integral loss function for shallow ReLU and leaky ReLU networks relative to an affine target function, we show that gradient descent circumvents most saddle points.
Furthermore, we prove convergence to global minima under favourable initialization conditions, quantified by an explicit threshold on the limiting loss.
\end{abstract}

\vskip 1ex \noindent{\bfseries Keywords:}
Neural networks $\cdot$ Center-stable manifolds $\cdot$ Gradient descent $\cdot$ Nonconvex optimization

\vskip 1ex \noindent{\bfseries Mathematics Subject Classification (2020):}
68T07 $\cdot$ 37D10


\section{Introduction}

In many machine learning frameworks, which constitute an important class of nonconvex optimization problems, gradient descent and its variants are the go-to algorithms for the training process.
However, due to the nonconvex nature of these problems, there are no a priori universal guarantees for convergence of these algorithms.
Both local minima and saddle points of the loss function used for training can prevent the algorithms from reaching a global minimum.
Originally, local minima were assumed to pose the greater challenge, but recent results suggest that saddle points are the main obstacle; \cite{ChoHenMatBenLeCun2015,DauPasGulChoGanBen2014,VenBanBru2019}.

An important ingredient in tackling this problem is {\itshape strictness} of saddle points, meaning that the Hessian of the loss function has a strictly negative eigenvalue at these saddle points.
The strictness ensures that there is a direction along which the loss surface declines significantly.
We explain this in more detail further below.
Under the strictness assumption, a stochastic version of gradient descent with suitable noise in each step has the ability to avoid saddle points because the noise ensures that we discover the declining direction;
\cite{Pemantle1990,GeHuangJinYuan2015,JinGeNetKakJor2017}.
The noise even guarantees a polynomial speed in escaping these saddle points; \cite{DuJinLeeJorSinPoc2017}.

In the case of vanilla gradient descent, there is no noise to rely on, and one needs more involved analytic methods.
A useful tool in this context is the {\itshape stable manifold theorem}, which is a cornerstone of classical dynamical systems theory; \cite{Shub1987}.
It has recently been applied to prove that vanilla gradient descent with suitable random initialization avoids strict saddle points with probability one if the loss function is sufficiently regular;
\cite{LeeSimJorRecht2016,PanPil2017}.
We remark that the applicability of the stable manifold theorem goes beyond vanilla gradient descent; see
\cite{DaskPan2018,LeePanPilSimJorRecht2019,ONeillWright2019}
for its application to variants of gradient descent and other first-order methods.

Accumulation points of gradient descent trajectories are critical points.
Under typical assumptions like boundedness of trajectories and, e.g., validity of {\L}ojasiewicz-type inequalities, it is also known that trajectories converge to a critical point; see
\cite{DavDruKakLee2020,FraGarPey2015}
for the stochastic and
\cite{AbsMahAnd2005,LeiHuLiTang2019}
for the nonstochastic version.
It follows that, with probability one, these limit critical points are local minima or nonstrict saddle points.
The strictness assumption has been discussed in the literature and has been shown to hold in a variety of settings;
e.g, in matrix recovery
\cite{BhoNeySre2016,GeJinZheng2017,SunQuWright2017a},
phase retrieval
\cite{SunQuWright2018},
tensor decomposition
\cite{GeHuangJinYuan2015},
shallow quadratic networks,
\cite{DuLee2018,SolJavLee2019},
and deep linear networks
\cite{BahRauTerWest2021,Kawaguchi2016}.
In particular, nonstrict saddle points appear to be less common than strict ones, and the above results shrink the gap to proving convergence to local minima.

Whereas strictness has been discussed in abundance, less attention has been given to the regularity assumptions imposed on the loss function.
In
\cite{DaskPan2018,LeeSimJorRecht2016,LeePanPilSimJorRecht2019,ONeillWright2019},
the loss function is taken to be twice continuously differentiable with a globally Lipschitz continuous gradient, and in \cite{PanPil2017} these conditions are assumed to hold on a forward-invariant convex open set.
This level of regularity makes the classical dynamical systems theory directly applicable to gradient descent algorithms.
However, in many modern machine learning applications, the loss function is neither twice continuously differentiable nor is its gradient uniformly Lipschitz continuous on suitable invariant sets.

The intuition behind this theory becomes clearer if one pictures the linearization of the gradient descent map $f(x) = x - \stepsize \nabla\Loss(x)$ (the function describing one step of the algorithm with stepsize $\stepsize$ and loss function $\Loss$) around a saddle point $z$.
Note that $z$, being a critical point of the loss function, is a fixed point of the gradient descent map.
For simplicity of the presentation, assume $z=0$.
The first-order Taylor approximation of $f$ around the origin reads $f(x) \approx f'(0)x = (I - \stepsize \nabla^2\Loss(0)) x$, where $I$ denotes the identity matrix.
Therefore, after neglecting second and higher-order terms, the behavior of the next step $f(x)$ can be determined by looking at $x$ in the eigenspace decomposition of the matrix $f'(0)$.
If the saddle point $0$ is strict, then $\nabla^2\Loss(0)$ has a strictly negative eigenvalue, so $f'(0)$ has an eigenvalue strictly greater than 1.
Thus, there is a direction in the linearization along which we move away from the origin.
This means that the only way to actually move towards the origin is if one moves {\itshape inside} a so-called center-stable manifold.
Loosely speaking, a center-stable manifold is a manifold whose tangent space at the origin is the span $E^{cs}$ of the eigenvectors of $f'(0)$ for eigenvalues of absolute value less than or equal to 1.
The span $E^{cs}$ is the center-stable space of the linearization, and a\footnote{While the linear subspace $E^{cs}$ is unique, a center-stable manifold is not; \cite{Shub1987}.} center-stable manifold takes into account second and higher-order terms.
The final step of the approach consists in showing that the set of initializations, from which the gradient descent trajectory eventually enters this center-stable manifold, has measure zero.

A restrictive assumption implicitly used in the above argument is that $f'(0)$ is non-degenerate.
Indeed, if $f'(0)$ is degenerate, then, for $x$ in the kernel of $f'(0)$, nothing can be said about $f(x)$ without considering second-order terms.
In
\cite{DaskPan2018,LeeSimJorRecht2016,LeePanPilSimJorRecht2019,ONeillWright2019,PanPil2017},
this non-degeneracy assumption is guaranteed to hold by requiring $\nabla\Loss$ to be globally Lipschitz continuous.
Then, $I - \stepsize \nabla^2\Loss(0)$ cannot be degenerate for sufficiently small $\stepsize$ compared to the Lipschitz constant.\footnote{We remark that local Lipschitz continuity at the origin is sufficient to guarantee that $I-\stepsize\nabla^2\Loss(0)$ is non-degenerate for small $\stepsize$.
But we want to study many saddle points in an unbounded set simultaneously, and $\stepsize$ would depend on the local Lipschitz constant around each of those saddle points.
Therefore, to guarantee $\stepsize \ne 0$, we would need a uniform upper bound on these local Lipschitz constants, which essentially amounts to a global bound.}
But global Lipschitz continuity of $\nabla\Loss$ is a strong assumption and does not hold in many machine learning frameworks.
In conclusion, one of the main difficulties on the side of the dynamical systems theory is to provide a variant of the center-stable manifold theorem that works even if $f'(0)$ is degenerate.
To this end, we extend a result of \cite{PanPilWang2019}, which we present in Theorem \ref{intro_center_manifold_thrm}.
Therein, observe that $f'(x)$ need only be non-degenerate almost everywhere but not necessarily at the saddle points $x \in \SetSaddle$ of interest.

\begin{theorem}
\label{intro_center_manifold_thrm}
	Let $d \in \N$,
let $\norm{\cdot} \colon \R^d \rightarrow \R$ be the standard norm on $\R^d$,
let $f \colon \R^d \rightarrow \R^d$ be a function,
let $(X^x_n)_{(n,x) \in \N_0 \times \R^d} \subseteq \R^d$ be given by $X^x_0 = x$ and $X^x_{n+1} = f(X^x_n)$,
let $V \subseteq U \subseteq \R^d$ be open sets,
assume that $\R^d \backslash V$ has Lebesgue measure zero,
assume $f|_U \in C^1(U,\R^d)$,
assume that $U \ni x \mapsto f'(x) \in \R^{d \times d}$ is  locally Lipschitz continuous,
assume for all $x \in V$ that $\det(f'(x)) \ne 0$,
let $\SetSaddle \subseteq \{x \in U \colon f(x)=x\}$,
and assume for all $x \in \SetSaddle$ that the matrix $f'(x)$ is symmetric and has an eigenvalue whose absolute value is strictly greater than 1.
Then, the set $\{ x \in \R^d \colon ( \exists\, y \in \SetSaddle \colon \limsup_{n \rightarrow \infty} \norm{X^x_n-y} = 0 ) \}$ has Lebesgue measure zero.
\end{theorem}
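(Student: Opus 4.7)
The plan is to combine a local center-stable manifold result around each saddle with a Lindelöf covering argument and the preservation of Lebesgue null sets under preimages of $f$ on the good set $V$. Concretely, I would first try to establish a local statement of the following form: for every $y \in \SetSaddle$ there exist an open neighborhood $B_y \subseteq U$ of $y$ and a $C^1$ embedded submanifold $W^{cs}_y \subseteq B_y$ of codimension at least one, such that every forward orbit of $f$ contained in $B_y$ lies in $W^{cs}_y$. Granted such a local result, whenever $X^x_n \to y \in \SetSaddle$ the tail $(X^x_n)_{n \ge N}$ is eventually trapped in $B_y$, so $X^x_N \in W^{cs}_y$, i.e.\ $x \in f^{-N}(W^{cs}_y)$.

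The hard step, and the reason the classical center-stable manifold results used in \cite{LeeSimJorRecht2016,PanPil2017} are not directly applicable here, is to build $W^{cs}_y$ despite the fact that $f'(y)$ is allowed to be degenerate. I would adapt the strategy of \cite{PanPilWang2019}. The symmetry of $f'(y)$ yields an orthogonal decomposition $\R^d = E^{cs} \oplus E^u$, where $E^u$ is the span of the eigenvectors of $f'(y)$ with eigenvalues of absolute value strictly greater than one (nonempty by hypothesis) and $E^{cs}$ is the span of the remaining eigenvectors (possibly including a nontrivial kernel direction). A Lyapunov--Perron or graph transform iteration performed on the fixed point equation for bounded orbits, using only that $f'(y)|_{E^u}$ is invertible with spectral radius $>1$, should then produce $W^{cs}_y$ as the graph over a small open piece of $E^{cs}$ of a $C^1$ map into $E^u$. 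The local Lipschitz continuity of $x \mapsto f'(x)$ provides the quadratic error estimate needed to close the contraction near $y$, bypassing the nondegeneracy of the full linearization.

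For the passage from local to global, I would use that $\SetSaddle \subseteq \R^d$ is Lindelöf to extract a countable subcover $(B_{y_i})_{i \in \N}$ of $\SetSaddle$ from the family $(B_y)_{y \in \SetSaddle}$, which confines the set in question to $\bigcup_{i \in \N,\, N \in \N_0} f^{-N}(W^{cs}_{y_i})$. Each $W^{cs}_{y_i}$ is a Lebesgue null set, being a $C^1$ submanifold of positive codimension, so the task reduces to showing that preimages under $f$ map Lebesgue null sets to Lebesgue null sets. Given a null set $A \subseteq \R^d$, decompose $f^{-1}(A) = (f^{-1}(A) \cap V) \cup (f^{-1}(A) \setminus V)$; the second piece lies in $\R^d \setminus V$ and is null by assumption, while for the first piece I would apply the inverse function theorem at every $x \in V$ (using $\det f'(x) \ne 0$ together with $f|_U \in C^1(U,\R^d)$ and $V \subseteq U$) and Lindelöf on $V$ to cover $V$ by countably many open sets on each of which $f$ is a $C^1$ diffeomorphism onto its image; the change-of-variables formula then makes each contribution null. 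Iterating this $N$ times and unioning over the countable index set finishes the argument.
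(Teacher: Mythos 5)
Your overall strategy---a local Lyapunov--Perron invariant manifold at each saddle, a Lindel\"of/second-countability argument to extract a countable cover, and preservation of Lebesgue-null sets under $f^{-1}$ via the almost-everywhere nondegeneracy of $f'$ on $V$---is exactly the route the paper takes (its \cref{center_manifold_thrm}, \cref{basin_union_measure_zero}, \cref{preserving_measure_zero_sets}, and \cref{GD_basin_measure_zero}). One detail worth correcting: you assert that $W^{cs}_y$ is a $C^1$ embedded submanifold, i.e.\ the graph of a $C^1$ map from a piece of $E^{cs}$ into $E^u$. When $f'(y)$ is allowed to be degenerate---which is precisely the case this variant is built to handle---the Lyapunov--Perron contraction in weighted sequence space only delivers a \emph{Lipschitz} graph, not a $C^1$ one; the paper explicitly records this loss of regularity as the price of dropping the local-diffeomorphism hypothesis. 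This does not break your argument, because the graph of a Lipschitz map over a subspace of dimension at most $d-1$ is still Lebesgue null (which is all you use), but the $C^1$ claim as stated would be hard to justify and should be weakened to Lipschitz.
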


This theorem entails a range of possible applications, including phase retrieval, matrix recovery, and learning with neural networks.
Shallow networks with a rectified linear unit (ReLU) activation will be studied more closely as they lend themselves to developing further techniques.
Namely, in the sketch further above, we implicitly use that $f'(0)$ is diagonalizable, which is guaranteed if $I - \stepsize \nabla^2\Loss(0)$ is a real symmetric matrix.
But this requires $\Loss$ to be twice differentiable at the origin, which is too strong a property to ask for.
To tackle this problem, we have to modify the gradient descent map and consider $f(x) = x - \stepsize \Grad(x)$, where $\Grad$ is a modification of $\nabla\Loss$.
The function $\Grad$ may not arise as the gradient of any scalar-valued function.
Therefore, we need to ensure explicitly that $\Grad'$ is symmetric at the origin so that $f'(0)$ is still diagonalizable.
Of course, we have to ensure that, upon replacing $\nabla\Loss$ by its modification $\Grad$, we do not loose information about the dynamics of the original gradient descent algorithm.
To obtain the necessary strictness of (in some sense) most saddle points, we rely on a classification of saddle points from \cite{CheJenRos2022JNLS}.
To apply this classification, we need to restrict our attention to shallow ReLU networks on the $L^2$-loss with respect to a one-dimensional affine target function.
Combining all of the above, we prove in Theorem \ref{GD_avoids_saddle_points} that the gradient descent algorithm almost surely avoids most saddle points in this framework, where almost surely is understood with respect to a random initialization that is absolutely continuous with respect to the Lebesgue measure.
In the analogous framework in which ReLU networks are replaced with leaky ReLU networks, the same result is deduced in Theorem \ref{leaky_GD_avoids_saddle_points}.
However, the proof simplifies for leaky ReLU as there is no need to work with a modified gradient.

Building more intricately on the classification of critical points from \cite{CheJenRos2022JNLS}, we proceed to deduce convergence of the algorithm to a global minimum under a suitable initialization as stated in Theorem \ref{intro_convergence_thrm} below.
Let us explain the notation used in that theorem.
A shallow network with $N$ hidden neurons and scalar input and output is a collection of weights and biases, represented by a vector $\NNskel \in \R^{3N+1}$.
The realization of such a network is the function $\NNfct$.
The map $\Loss$ is the squared $L^2$-loss measured against a target function $\target$.
As $\Loss$ is not differentiable everywhere, we take $\Grad$ to be the left gradient of $\Loss$, that is we take partial directional derivatives from the left.
This specific choice is for the sake of the presentation, but in the main body of this article $\Grad$ may take coordinate-wise any values when $\Loss$ is not differentiable.
Finally, $\SGDstep{k+1} = \SGDstep{k} - \stepsize \Grad(\SGDstep{k})$ is the gradient descent algorithm with stepsize $\stepsize$ and initial value $\NNskel$.

\begin{theorem}
\label{intro_convergence_thrm}
	Let $N \in \N$, $\timeZero,\timeOne \in \R$ satisfy $\timeZero < \timeOne$ and $N/2 \in \N$,
for every $\NNskel =(\NNskel_1,\dots,\NNskel_{3N+1}) \in \R^{3N+1}$ let $\NNfct \in C([\timeZero,\timeOne],\R)$ be given by $\NNfct(x) = \NNskel_{3N+1} + \sum_{j=1}^{N} \NNskel_{2N+j} \max\{\NNskel_jx+\NNskel_{N+j},0\}$,
let $\target \in C([\timeZero,\timeOne],\R)$ be affine,
let $\Loss \in C(\R^{3N+1},\R)$ be given by $\Loss(\NNskel) = \int_{\timeZero}^{\timeOne} ( \NNfct(x) - \target(x) )^2 \, dx$,
let $\Grad \colon \R^{3N+1} \rightarrow \R^{3N+1}$ be the left gradient of $\Loss$,
and let $(\SGDstep{k})_{(k,\stepsize,\NNskel) \in \N_0 \times (0,\infty) \times \R^{3N+1}} \subseteq \R^{3N+1}$ be given by $\SGDstep{0} = \NNskel$ and $\SGDstep{k+1} = \SGDstep{k} - \stepsize \Grad(\SGDstep{k})$.
Then, for Lebesgue almost all $\stepsize \in (0,\infty)$ and Lebesgue almost all
\begin{equation}
	\NNskel \in \left\{ \NNskelAlt \in \R^{3N+1} \colon (\SGDstepAlt{k})_{k \in \N_0} \text{ is convergent and } \lim_{k \rightarrow \infty} \Loss(\SGDstepAlt{k}) < \textstyle{\frac{[\target'(\timeZero)]^2 (\timeOne-\timeZero)^3}{12(N-1)^4}} \right\}
\end{equation}%
it holds that $\lim_{k \rightarrow \infty} \Loss(\SGDstep{k}) = 0$.
\end{theorem}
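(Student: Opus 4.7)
The plan is to combine three ingredients the paper has assembled: accumulation points of convergent gradient descent trajectories are critical points of $\Loss$; the classification of critical points of the square integral loss of shallow ReLU networks against an affine target function from \cite{CheJenRos2022JNLS}; and the avoidance result \cref{GD_avoids_saddle_points} derived from the generalized center-stable manifold theorem \cref{intro_center_manifold_thrm}. First I would note that if $(\SGDstep{k})_{k \in \N_0}$ converges to some $\NNskel^\ast$, then $\Grad(\SGDstep{k}) = \stepsize^{-1}(\SGDstep{k}-\SGDstep{k+1}) \to 0$, so $\NNskel^\ast$ is a critical point of $\Loss$ in the appropriate one-sided sense and $\lim_{k \to \infty}\Loss(\SGDstep{k}) = \Loss(\NNskel^\ast)$ by continuity of $\Loss$.

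Next I would slice the set of critical points by loss value. Using the classification of \cite{CheJenRos2022JNLS}, together with the affinity of $\target$ and the parity assumption $N/2 \in \N$, I would show that every critical point $\NNskel^\ast$ with $\Loss(\NNskel^\ast) < \frac{[\target'(\timeZero)]^2 (\timeOne-\timeZero)^3}{12(N-1)^4}$ is either a global minimum (loss zero) or lies in the saddle set $\SetSaddle$ of points at which the modified Jacobian $\Grad'$ is symmetric and carries an eigenvalue of absolute value exceeding $1$ after the scaling $I - \stepsize\,\Grad'$. The quantitative threshold is calibrated precisely so that any non-minimum activation pattern achieving a loss below it must leave some hidden unit in a configuration forcing a strictly negative eigenvalue of the appropriate Hessian surrogate; this is where the parity of $N$ and the affinity of $\target$ are used in the combinatorial bookkeeping of \cite{CheJenRos2022JNLS}.

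Then I would apply \cref{GD_avoids_saddle_points} to conclude that for Lebesgue almost every $\stepsize \in (0,\infty)$ and Lebesgue almost every initialization $\NNskelAlt \in \R^{3N+1}$, the trajectory $(\SGDstepAlt{k})_{k \in \N_0}$ does not accumulate in $\SetSaddle$. The Lebesgue null exception set in $\stepsize$ arises because, on a countable cover of $\SetSaddle$ by coordinate charts indexed by activation patterns, the spectral hypotheses on $I - \stepsize\,\Grad'$ of \cref{intro_center_manifold_thrm} fail only on a discrete set of $\stepsize$, and a countable union of such discrete sets is Lebesgue null. Combining the three steps, on the full-measure good set of $(\stepsize,\NNskelAlt)$ with convergent trajectory and limiting loss below the threshold, the limit point cannot lie in $\SetSaddle$, so it must be a global minimum, which gives $\lim_{k \to \infty}\Loss(\SGDstep{k}) = 0$.

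The principal obstacle I expect is the second paragraph: extracting from the classification of \cite{CheJenRos2022JNLS} the precise quantitative statement that the threshold $\frac{[\target'(\timeZero)]^2 (\timeOne-\timeZero)^3}{12(N-1)^4}$ separates non-minimum critical points from global minima inside the escapable regime, while simultaneously constructing the coordinate-wise modification $\Grad$ of the non-differentiable left gradient so that $\Grad'$ is symmetric at every such saddle and genuinely admits an unstable direction. A secondary technicality is verifying that the measure-zero conclusion of \cref{intro_center_manifold_thrm}, applied locally on each stratum of the ReLU partition of $\R^{3N+1}$, assembles into a single measure-zero set of bad initializations under the countable-chart reduction.
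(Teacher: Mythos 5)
Your three-ingredient plan (limits of convergent trajectories are critical, classify critical points by loss value, invoke \cref{GD_avoids_saddle_points} to rule out the saddle case) is the paper's skeleton, but there is a genuine gap in the second step that the paper's proof has to work around. The classification from \cite{CheJenRos2022JNLS} (\cref{classification}) is stated only for critical points in $U_0^{(\emptyset,\emptyset)}$, i.e.\ networks without degenerate neurons, and the saddle set $\SetSaddle$ is by definition a subset of $U_0^{(\emptyset,\emptyset)}$. The limit point $\NNskel^\ast$ of a convergent trajectory may have degenerate neurons, at which $\Loss$ is not even differentiable and the generalized gradient $\Grad$ need not be continuous, so you cannot directly conclude from $\Grad(\SGDstep{k})\to 0$ that $\NNskel^\ast$ is a critical point, and you cannot apply the classification to it. The paper resolves this by letting $m$ be the number of degenerate neurons of $\NNskel^\ast$, showing $m\le N-1$ (the loss bound rules out the constant realization), dropping those $m$ neurons to obtain a reduced network in $\R^{d-3m}$, arguing that on this reduced parameter space the generalized gradient is continuous near the reduced network so that it is a genuine critical point with the same loss value, and only then invoking the classification. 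Because $n$ in \cref{classification}.(iv) is even and bounded by the number $N-m$ of remaining neurons while $N$ is even, the threshold $\frac{[\target'(\timeZero)]^2(\timeOne-\timeZero)^3}{12(N-1)^4}$ forces $n\ge N$, hence $m=0$ and $n=N$, so the limit has no degenerate neurons and all $N$ neurons are type-2-active. That last chain is precisely what puts the non-global-minimum case inside $\SetSaddle$. Your proof asserts the dichotomy ``global minimum or in $\SetSaddle$'' without this reduction, and as stated the dichotomy is false for limit points with degenerate neurons.

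A secondary inaccuracy: the Lebesgue-null set of bad stepsizes does not come from the unstable-eigenvalue condition of \cref{intro_center_manifold_thrm} failing on a ``discrete set of $\stepsize$ per chart''. Once $(\Grad^J)'$ has a strictly negative eigenvalue at a saddle, $I-\stepsize(\Grad^J)'$ has an eigenvalue of absolute value greater than $1$ for \emph{every} $\stepsize>0$. What actually restricts $\stepsize$ is the requirement that $\det f_{\stepsize,J}'(\NNskel)\ne 0$ on a full-measure set of $\NNskel$; the paper establishes this via \cref{regularity_modified_gradient}.(iv) (rationality of $\Grad^J$) and a Tonelli argument in \cref{nondegenerate_Jacobian}, which yields the almost-all-$\stepsize$ statement. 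The conclusion is the same, but the mechanism you describe is not the one that does the work.
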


We remark that the conclusion of Theorem \ref{intro_convergence_thrm} is void if the target function $\target$ is constant.
In this case, every critical point of $\Loss$ is a global minimum and there is nothing to prove; \cite{CheJenRieRos2022}.


\subsection{Structure}

The remainder of this article is structured as follows.
In Section \ref{section_center_manifold}, we state our variant of the center-stable manifold theorem and deduce Theorem \ref{intro_center_manifold_thrm}.
We explore applications of Theorem \ref{intro_center_manifold_thrm} in Section \ref{section_applications}.
Section \ref{section_network_training} examines the shallow ReLU network framework as a particular example and concludes with proving Theorem \ref{intro_convergence_thrm}.
The results for ReLU networks are extended to leaky ReLU networks in Section \ref{leaky_section_network_training}.
Finally, Section \ref{section_proof_center_manifold_thrm} contains the proof of the center-stable manifold theorem.

\subsection{Notation}

We denote by $\norm{\cdot}$ the Euclidean norm when applied to vectors and the operator norm induced by the Euclidean norm when applied to matrices.
Throughout this article, we fix a dimension $d \in \N$ and write $I \in \R^{d \times d}$ for the identity matrix.
The closed ball around a point $x \in \R^d$ with radius $r \in (0,\infty)$ is denoted $\B_r(x) = \{y \in \R^d \colon \norm{y-x} \leq r \}$.
A discrete dynamical system is written as follows.
For every function $f \colon \R^d \rightarrow \R^d$, we denote by $f^k \colon \R^d \rightarrow \R^d$, $k \in \N_0$, the functions that satisfy for all $k \in \N_0$ that $f^0 = \mathrm{id}_{\R^d}$ and $f^{k+1} = f \circ f^k$.
To describe critical points of a function $\Loss \colon \R^d \rightarrow \R$, we use the following terminology.
Local extrema refer to nonstrict local extrema; a point $x \in \R^d$ is called a critical point of $\Loss$ if $\Loss$ is differentiable at $x$ with $\nabla\Loss(x) = 0$; and a critical point is called a saddle point if it is not a local extremum.


\section{A center-stable manifold theorem}
\label{section_center_manifold}

The core of this section is a variant of the stable manifold theorem.
The novelty is that we do not require the dynamical system to be a local diffeomorphism as is the case in the classical formulation \cite{Shub1987}.
Specifically, the Jacobian may be degenerate at the fixed point under consideration.
This comes at the expense of less regularity of the center-stable manifold.
Indeed, the graph in Theorem \ref{center_manifold_thrm} is only proved to be Lipschitz-regular.
Our variant is an extension of the corresponding statement in \cite{PanPilWang2019}.
The exact regularity requirement needed is a certain local Lipschitz condition on the remainder term of the first-order Taylor expansion of the dynamical system around a fixed point:

\begin{assumption}
\label{center_manifold_asmp}
	Let $f \colon \R^d \rightarrow \R^d$ be a function and let $\SetSaddle \subseteq \{ x \in \R^d \colon f(x) = x\}$.
Assume for all $z \in \SetSaddle$ that $f$ is differentiable at $z$, that the matrix $f'(z) \in \R^{d \times d}$ is diagonalizable over $\R$ and has an eigenvalue of absolute value strictly greater than 1, and that for all $\epsilon \in (0,\infty)$ there exists $r_{\epsilon} \in (0,\infty)$ so that the map $\B_{r_{\epsilon}}(z) \rightarrow \R^d$, $x \mapsto f(x) - z - f'(z) (x-z)$ is $\epsilon$-Lipschitz continuous.
\end{assumption}

For all $z \in \SetSaddle$, denote by $E^{cs}_z \subseteq \R^d$ the span of those eigenvectors of $f'(z)$ associated with eigenvalues that lie in $[-1,1]$ (the center-stable space) and by $E^u_z \subseteq \R^d$ the span of those eigenvectors of $f'(z)$ associated with eigenvalues that lie in $\R \backslash [-1,1]$ (the unstable space).
Then, $\R^d = E^{cs}_z \oplus E^u_z$.
Under Assumption \ref{center_manifold_asmp}, we have $0 \leq \dim(E^{cs}_z) \leq d-1$ for all $z \in \SetSaddle$.
Now, we can state our version of the center-stable manifold theorem.

\begin{theorem}[Center-stable Lipschitz manifold]
\label{center_manifold_thrm}
	Let Assumption \ref{center_manifold_asmp} hold and let $z \in \SetSaddle$.
Then, there exists an $r \in (0,\infty)$ and a Lipschitz continuous map $\CSMmap \colon E^{cs}_z \rightarrow E^u_z$ such that
\begin{equation*}
	\{x \in \R^d \colon f^k(x) \in \B_r(z) \text{ for all } k \in \N_0\} \subseteq \mathrm{Graph}(\CSMmap).
\end{equation*}%
\end{theorem}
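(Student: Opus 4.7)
The plan is to adapt the Lyapunov--Perron fixed-point method to this low-regularity, possibly non-invertible setting. First I would translate so that $z = 0$ and, using diagonalizability from \cref{center_manifold_asmp}, decompose $\R^d = E^{cs}_z \oplus E^u_z$ into an eigenbasis of $A := f'(z)$. Working in the adapted norm that makes this eigenbasis orthonormal, $A$ acquires block form $A_{cs} \oplus A_u$ with $\|A_{cs}\| \leq 1$, and $\mu := \min\{|\xi| \colon \xi \text{ is an eigenvalue of } A_u\} > 1$ gives $\|A_u^{-1}\| \leq \mu^{-1}$. Writing $f(x) = Ax + g(x)$, the assumption supplies, for every $\epsilon \in (0,\infty)$, a radius $r_\epsilon > 0$ on which $g$ is $\epsilon$-Lipschitz, with $g(0) = 0$.

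Next I would fix $\lambda \in (1, \mu)$, equip the space of $\R^d$-valued sequences with the weighted sup norm $\|x\|_\lambda := \sup_{n \in \N_0} \lambda^{-n} \|x_n\|$, and introduce on the resulting Banach space, for each $u \in E^{cs}_z$ of small norm, the Perron operator
\begin{equation*}
	T_u(x)_n := A_{cs}^n u + \sum_{k=0}^{n-1} A_{cs}^{n-1-k} g_{cs}(x_k) - \sum_{j=0}^{\infty} A_u^{-(j+1)} g_u(x_{n+j}),
\end{equation*}
where $g_{cs}, g_u$ denote the projections of $g$ onto $E^{cs}_z$ and $E^u_z$. The choice $\lambda > 1$ is essential because $A_{cs}$ does not contract, while the strict inequality $\lambda < \mu$ keeps the backward unstable tail geometrically summable. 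The fixed points of $T_u$ correspond precisely to bounded forward orbits of $f$ whose center-stable component at time $0$ equals $u$.

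Combining $\|A_{cs}^m\| \leq 1$, $\|A_u^{-m}\| \leq \mu^{-m}$, and the $\epsilon$-Lipschitz bound on $g$, I would derive estimates of the form $\|T_u(x)\|_\lambda \leq \|u\| + C \epsilon \|x\|_\lambda$ and $\|T_u(x) - T_u(y)\|_\lambda \leq C \epsilon \|x-y\|_\lambda$ with $C := 1/(\lambda-1) + 1/(\mu-\lambda)$. Choosing $r \in (0,\infty)$ small enough that the Lipschitz estimate on $g$ holds with $\epsilon$ satisfying $C\epsilon < 1$, $T_u$ becomes a contraction on a suitable closed ball of the weighted space, yielding a unique fixed point $x(u)$ depending Lipschitz continuously on $u$. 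Setting $\CSMmap(u) := \pi_u(x(u)_0)$ produces the required Lipschitz map $\CSMmap \colon E^{cs}_z \rightarrow E^u_z$. To close the argument, any orbit $(x_n)_{n \in \N_0}$ of $f$ staying in $\B_r(z)$ automatically lies in the weighted space with $\|x\|_\lambda \leq r$, and a direct check---forward induction on the center-stable component and iteration of $\pi_u(x_n) = A_u^{-1}(\pi_u(x_{n+1}) - g_u(x_n))$, exploiting the vanishing tail $\|A_u^{-N} \pi_u(x_{n+N})\| \leq \mu^{-N} r \rightarrow 0$---shows that $x$ solves the fixed-point equation for $u := \pi_{cs}(x_0)$, so by uniqueness $x_0 \in \mathrm{Graph}(\CSMmap)$.

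The main obstacle is exactly what the theorem was set up to accommodate: the potential degeneracy of $f'(z)$ on $E^{cs}_z$ and the absence of any global Lipschitz bound on $g$. Both are bypassed by the weighted-norm choice. Non-invertibility of $A_{cs}$ never enters, because the $cs$-recursion runs forward in time; and the weakened regularity in \cref{center_manifold_asmp}---an $\epsilon$-Lipschitz bound on $\B_{r_\epsilon}(z)$ for arbitrarily small $\epsilon$---is precisely what is needed to achieve $C\epsilon < 1$ after choosing $r$ small enough. This is the point at which the argument diverges from the classical $C^1$-diffeomorphism framework of \cite{Shub1987} and extends the result of \cite{PanPilWang2019}.
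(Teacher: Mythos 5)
Your overall strategy is the right one and closely mirrors the paper's: translate to $z = 0$, conjugate by a change of basis to put $f'(z)$ in block-diagonal form, set up the Lyapunov--Perron operator on a weighted sequence space, apply the Banach fixed point theorem, and read off the Lipschitz graph from the fixed-point map; the general case is then deduced from the diagonal one by conjugation, exactly as you do.

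However, there is a genuine gap in how you make the Perron operator a contraction. You propose to restrict $T_u$ to ``a suitable closed ball of the weighted space.'' But a ball $\{x \colon \|x\|_\lambda \leq R\}$ in the norm $\|x\|_\lambda = \sup_n \lambda^{-n}\|x_n\|$ does \emph{not} control $\|x_n\|$ uniformly in $n$; it only gives $\|x_n\| \leq R\lambda^n$, which is unbounded as $n \to \infty$. Consequently, the $\epsilon$-Lipschitz estimate on $g$, which by \cref{center_manifold_asmp} holds only on the small ball $\B_{r_\epsilon}(z)$, cannot be applied to $g(x_n)$ for large $n$, and $T_u$ is neither well-defined nor a contraction on any ball of the weighted space. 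Restricting further to sequences confined termwise to $\B_{r_\epsilon}(0)$ does not repair this, because the center-stable block of $T_u$ contains the sum $\sum_{k=0}^{n-1} A_{cs}^{n-1-k}g_{cs}(x_k)$, which has $n$ summands with no decay in $k$; so $T_u$ fails to map that set into itself.

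The paper resolves this by \emph{globally truncating} the nonlinearity: \cref{bump_function} constructs a smooth bump $\rho_r$ that is the identity on $\B_{r/2}(0)$, has image in $\B_r(0)$, and is $6\sqrt d$-Lipschitz uniformly in $r$. Replacing $g$ by $g\circ\rho_{r_\epsilon}$ yields a \emph{globally} $6\epsilon\sqrt d$-Lipschitz map, so the Lyapunov--Perron operator is defined and contracting on the whole weighted Banach space, and uniqueness of the fixed point is unambiguous. One then verifies a posteriori that orbits staying in the smaller ball $\B_{r_\epsilon/2}(0)$ are unaffected by the cutoff and hence coincide with the fixed point for the truncated operator. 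Your argument needs this truncation step (or an equivalent device) to close; without it the contraction mapping theorem cannot be invoked on any natural complete space containing the orbits of interest.
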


This theorem states that all those points, whose orbits under the dynamical system remain close to $z$, lie in the graph of a Lipschitz function, whose domain is a linear space of dimension between 0 and $d-1$.
We defer the proof to Section \ref{section_proof_center_manifold_thrm}.
In Theorem \ref{center_manifold_thrm}, we considered a single point $z \in \SetSaddle$.
We obtain a statement about all points in $\SetSaddle$ simultaneously the same way it was done in
\cite{PanPil2017,LeePanPilSimJorRecht2019}, using second-countability of Euclidean space.
For completeness, we repeat the argument to prove Corollary \ref{basin_union_measure_zero}.

\begin{corollary}
\label{basin_union_measure_zero}
	Let Assumption \ref{center_manifold_asmp} hold.
Then, there exists a set $W \subseteq \R^d$ of Lebesgue measure zero such that
\begin{equation*}
	\Big\{ x \in \R^d \colon \lim_{k \rightarrow \infty} f^k(x) \in \SetSaddle \Big\} \subseteq \bigcup_{k \in \N_0} f^{-k}(W).
\end{equation*}%
\end{corollary}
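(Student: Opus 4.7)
The plan is to apply \cref{center_manifold_thrm} pointwise at every $z \in \SetSaddle$ and then bundle the resulting local center-stable graphs into a single null set through a second-countability (Lindelöf) argument, in the spirit of \cite{LeePanPilSimJorRecht2019,PanPil2017}.

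First, for each $z \in \SetSaddle$, \cref{center_manifold_thrm} supplies a radius $r_z \in (0,\infty)$ and a Lipschitz continuous map $\CSMmap_z \colon E^{cs}_z \to E^u_z$ with the property that every $x \in \R^d$ whose full forward orbit under $f$ lies in $\B_{r_z}(z)$ belongs to $\mathrm{Graph}(\CSMmap_z)$. Since \cref{center_manifold_asmp} guarantees $\dim(E^{cs}_z) \leq d-1$, the map $E^{cs}_z \ni y \mapsto y + \CSMmap_z(y) \in \R^d$ is Lipschitz from a subspace of dimension at most $d-1$ into $\R^d$, so $\mathrm{Graph}(\CSMmap_z)$ has Lebesgue measure zero.

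Next, I would select countably many such data. The open cover $\{\B_{r_z/2}(z) \colon z \in \SetSaddle\}$ of its own union is Lindelöf, so there exists a sequence $(z_j)_{j \in \N} \subseteq \SetSaddle$ with $\SetSaddle \subseteq \bigcup_{j \in \N} \B_{r_{z_j}/2}(z_j)$. Defining $W = \bigcup_{j \in \N} \mathrm{Graph}(\CSMmap_{z_j})$ produces a countable union of null sets, hence itself a Lebesgue null set.

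Finally, to verify the inclusion, suppose $x \in \R^d$ satisfies $\lim_{k \to \infty} f^k(x) = z$ for some $z \in \SetSaddle$. I pick $j \in \N$ with $z \in \B_{r_{z_j}/2}(z_j)$ and choose $K \in \N_0$ so large that $\norm{f^k(x) - z} < r_{z_j}/2$ for every $k \geq K$. The triangle inequality then yields $f^k(x) \in \B_{r_{z_j}}(z_j)$ for all $k \geq K$, so the forward orbit of $f^K(x)$ stays in $\B_{r_{z_j}}(z_j)$, and \cref{center_manifold_thrm} applied at $z_j$ forces $f^K(x) \in \mathrm{Graph}(\CSMmap_{z_j}) \subseteq W$, i.e., $x \in f^{-K}(W) \subseteq \bigcup_{k \in \N_0} f^{-k}(W)$. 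The only nontrivial ingredient is the null-measure fact for Lipschitz graphs, which is standard but crucially uses the Lipschitz regularity supplied by \cref{center_manifold_thrm} (mere continuity of $\CSMmap_z$ would not suffice); the rest is bookkeeping through Lindelöf.
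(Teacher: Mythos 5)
Your argument matches the paper's own proof: apply \cref{center_manifold_thrm} at each $z \in \SetSaddle$ to get a local Lipschitz graph, extract a countable family by second-countability/Lindelöf, take $W$ to be the union of those graphs (null since each is a Lipschitz graph over a subspace of dimension $\leq d-1$), and observe that any orbit converging to $\SetSaddle$ eventually sits inside one of the selected balls. The only nitpick is that $\B_r(\cdot)$ is defined in the paper as a \emph{closed} ball, so to invoke Lindelöf you should cover $\SetSaddle$ by the corresponding \emph{open} balls of radius $r_z/2$ and then note the closed balls of the same radii a fortiori cover; this does not change the argument.
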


\begin{proof}
	By Theorem \ref{center_manifold_thrm}, for all $z \in \SetSaddle$, there exists an open neighborhood $U_z \subseteq \R^d$ of $z$ and a Lipschitz continuous map $\CSMmap_z \colon E^{cs}_z \rightarrow E^u_z$ such that $\{x \in \R^d \colon f^k(x) \in U_z \text{ for all } k \in \N_0\} \subseteq \mathrm{Graph}(\CSMmap_z)$.
Now, $\bigcup_{z \in \SetSaddle} U_z$ is an open cover of $\SetSaddle$ and, by second-countability of $\R^d$, there exists a countable subcover $\bigcup_{n \in \N_0} U_{z_n}$.
Set $W = \bigcup_{n \in \N_0} \mathrm{Graph}(\CSMmap_{z_n})$.
If $y \in \{ x \in \R^d \colon \lim_{k \rightarrow \infty} f^k(x) \in \SetSaddle \}$, then there exist $k,n \in \N_0$ such that for all $m \in \N_0$ we have $f^m(f^k((y)) = f^{m+k}(y) \in U_{z_n}$.
Thus, $f^k(y) \in \mathrm{Graph}(\CSMmap_{z_n})$ and, hence, $y \in f^{-k}(W)$.
Lastly, the set $W$, being a countable union of graphs, has Lebesgue measure zero.
\end{proof}

Note that Corollary \ref{basin_union_measure_zero} is a statement about the stable set of $\SetSaddle$ and not its center-stable set.
However, the proof of the corollary relies on the center-stable manifolds from Theorem \ref{center_manifold_thrm} and would not work with the stable manifolds.

The goal is to show, under reasonable assumptions, that the set $\{ x \in \R^d \colon \lim_{k \rightarrow \infty} f^k(x) \in \SetSaddle \}$ has Lebesgue measure zero.
This follows from the previous corollary if we can ensure that preimages of measure zero sets under $f^k$ have themselves measure zero.
This is certainly true for local diffeomorphisms
(see \cite{PanPil2017,LeePanPilSimJorRecht2019}), but we do not want to exclude the possibility that the dynamical system has a degenerate Jacobian at points in $\SetSaddle$.
Fortunately, it is sufficient to have a non-degenerate Jacobian almost everywhere (but potentially at no point in $\SetSaddle$), as the next lemma shows.

\begin{lemma}
\label{preserving_measure_zero_sets}
	Let $f \colon \R^d \rightarrow \R^d$ be a function and suppose there exists an open set $V \subseteq \R^d$, whose complement has Lebesgue measure zero, such that $f$ is continuously differentiable on $V$ with $\det(f'(x)) \ne 0$ for all $x \in V$.
Then, for any set $W \subseteq \R^d$ of Lebesgue measure zero, the set $f^{-1}(W)$ also has Lebesgue measure zero.
\end{lemma}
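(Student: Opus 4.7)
The plan is to split $f^{-1}(W)$ into a part lying in $V$ and a part in the complement, and handle each piece separately. First I would write
\begin{equation*}
	f^{-1}(W) = \bigl(f^{-1}(W) \cap V\bigr) \cup \bigl(f^{-1}(W) \cap (\R^d \setminus V)\bigr).
\end{equation*}
The second piece is contained in $\R^d \setminus V$, which has Lebesgue measure zero by hypothesis. So the whole task reduces to controlling $f^{-1}(W) \cap V$.

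On $V$, the function $f$ is $C^1$ with everywhere nonsingular Jacobian, so the inverse function theorem applies pointwise. For each $x \in V$, I would pick an open neighborhood $U_x \subseteq V$ of $x$ on which $f|_{U_x}$ is a $C^1$ diffeomorphism onto its image $f(U_x)$, which is open in $\R^d$. The open cover $\{U_x\}_{x \in V}$ of $V$ admits a countable subcover $\{U_{x_n}\}_{n \in \N}$ by the Lindelöf property of the second-countable space $\R^d$. Consequently,
\begin{equation*}
	f^{-1}(W) \cap V = \bigcup_{n \in \N} \bigl( f^{-1}(W) \cap U_{x_n} \bigr) = \bigcup_{n \in \N} \bigl( f|_{U_{x_n}} \bigr)^{-1}\!\bigl( W \cap f(U_{x_n}) \bigr).
\end{equation*}

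To finish, I would invoke the standard fact that a $C^1$ diffeomorphism between open subsets of $\R^d$ sends Lebesgue null sets to Lebesgue null sets (which follows from the local Lipschitz property of $C^1$ maps on compact sets, or directly from the change of variables formula). Applied to $\bigl(f|_{U_{x_n}}\bigr)^{-1}$, it gives that each set $\bigl(f|_{U_{x_n}}\bigr)^{-1}(W \cap f(U_{x_n}))$ has Lebesgue measure zero, since $W \cap f(U_{x_n})$ does. Hence $f^{-1}(W) \cap V$ is a countable union of null sets, so itself null, and combined with the first piece this shows that $f^{-1}(W)$ has Lebesgue measure zero.

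The argument is largely a bookkeeping exercise; the only mildly subtle point is to make sure the lack of regularity of $f$ off $V$ causes no harm, which is why I would extract and dispose of the $\R^d \setminus V$ piece at the very beginning. Everything else rests on the inverse function theorem and second countability, both applied to $V$ where the hypotheses are exactly those needed.
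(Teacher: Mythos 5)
Your proof is correct and takes essentially the same route as the paper's: split $f^{-1}(W)$ along $V$ and its null complement, cover $V$ by countably many charts on which $f$ is a $C^1$ diffeomorphism, and use change of variables to conclude each piece is null. The only difference is cosmetic: the paper writes out the change-of-variables integral $\int_{f^{-1}(W)\cap D_n}|\det f'|\,dx = \int_{f(f^{-1}(W)\cap D_n)}dx \le \int_W dx = 0$ explicitly, whereas you invoke the equivalent standard fact that a $C^1$ diffeomorphism (being locally Lipschitz) maps null sets to null sets.
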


\begin{proof}
	First, note that $f$ is Lebesgue measurable because it is continuous on a subset of full measure.
It suffices to show that the set $f^{-1}(W) \cap V$ has Lebesgue measure zero.
By the assumptions, the restriction of $f$ to $V$ is a local $C^1$-diffeomorphism.
This and second-countability of $\R^d$ guarantee the existence of a countable open cover $\bigcup_{n \in \N} D_n = V$ of $V$ such that, for all $n \in \N$, the restriction $f|_{D_n} \colon D_n \rightarrow f(D_n)$ is a $C^1$-diffeomorphism.
By the integral transformation theorem, we have, for all $n \in \N$,
\begin{equation*}
	\int_{f^{-1}(W) \cap D_n} |\!\det(f'(x))| \, dx = \int_{f(f^{-1}(W) \cap D_n)} dx \leq \int_W dx = 0.
\end{equation*}%
Since $\det(f'(x)) \ne 0$ for all $x \in V$, this implies that $f^{-1}(W) \cap D_n$ has Lebesgue measure zero for all $n \in \N$ and, hence, so does $f^{-1}(W) \cap V$.
\end{proof}

With Corollary \ref{basin_union_measure_zero}, we conclude that if Assumption \ref{center_manifold_asmp} holds as well as the assumption of Lemma \ref{preserving_measure_zero_sets}, then the stable set $\{ x \in \R^d \colon \lim_{k \rightarrow \infty} f^k(x) \in \SetSaddle \}$ of $\SetSaddle$ has Lebesgue measure zero.
We finish this section by applying this result to a class of dynamical systems that includes the gradient descent algorithm, which will be of interest in the next section.

\begin{proposition}
\label{GD_basin_measure_zero}
	Let $f \colon \R^d \rightarrow \R^d$ be a function and suppose there exist open sets $V \subseteq U \subseteq \R^d$, whose complements have Lebesgue measure zero, such that $f$ is continuously differentiable on $U$ with a locally Lipschitz continuous Jacobian, which is non-degenerate on $V$.
Let $\SetSaddle \subseteq \{x \in U \colon f(x)=x\}$
and assume for all $x \in \SetSaddle$ that $f'(x)$ is symmetric and has an eigenvalue of absolute value strictly greater than 1.
Then, the set $\{ x \in \R^d \colon \lim_{k \rightarrow \infty} f^k(x) \in \SetSaddle\}$ has Lebesgue measure zero.
\end{proposition}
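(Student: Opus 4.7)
The plan is to combine \cref{basin_union_measure_zero} with \cref{preserving_measure_zero_sets}. The first step is to verify that \cref{center_manifold_asmp} holds for the pair $(f, \SetSaddle)$. Since $\SetSaddle \subseteq U$ and $f|_U \in C^1(U,\R^d)$, the function $f$ is differentiable at every $z \in \SetSaddle$; the symmetry of $f'(z)$ combined with the spectral theorem yields diagonalisability over $\R$; and the existence of an eigenvalue of absolute value strictly greater than one is part of the hypotheses.

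The less immediate point of this verification is the $\epsilon$-Lipschitz condition on the Taylor remainder $g(x) = f(x) - z - f'(z)(x-z)$. Fix $z \in \SetSaddle$ and $\epsilon \in (0,\infty)$. By openness of $U$ and local Lipschitz continuity of $f'$, there exist $r_0 \in (0,\infty)$ with $\B_{r_0}(z) \subseteq U$ and $L \in (0,\infty)$ such that $\norm{f'(x) - f'(y)} \leq L\norm{x-y}$ for all $x,y \in \B_{r_0}(z)$. For $r \in (0, r_0]$ and $x \in \B_r(z)$ the chain rule gives $g'(x) = f'(x) - f'(z)$ and thus $\norm{g'(x)} \leq Lr$, so convexity of the ball together with the mean value inequality show that $g$ is $Lr$-Lipschitz on $\B_r(z)$. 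Choosing $r_\epsilon = \min\{r_0, \epsilon/L\}$ yields the desired $\epsilon$-Lipschitz property and completes the check of \cref{center_manifold_asmp}.

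With \cref{center_manifold_asmp} verified, \cref{basin_union_measure_zero} supplies a Lebesgue-null set $W \subseteq \R^d$ such that the set $\{x \in \R^d \colon \lim_{k \to \infty} f^k(x) \in \SetSaddle\}$ is contained in $\bigcup_{k \in \N_0} f^{-k}(W)$. The remaining hypotheses, namely $f \in C^1(U,\R^d)$, $\det(f'(\cdot)) \neq 0$ on $V$, and $\R^d \setminus V$ Lebesgue-null, are exactly those of \cref{preserving_measure_zero_sets}, so the $f$-preimage of any Lebesgue-null set is Lebesgue-null. A short induction on $k$ (with $f^{-0}(W) = W$ as base case) then gives that $f^{-k}(W)$ is null for every $k \in \N_0$, and countable subadditivity of Lebesgue measure finishes the proof. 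I expect the verification of the Taylor-remainder Lipschitz bound to be the most technical step, but as indicated above it reduces to a standard mean value argument, and the rest of the proof is essentially bookkeeping around the two preceding results.
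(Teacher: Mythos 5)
Your proposal is correct and follows essentially the same route as the paper: verify \cref{center_manifold_asmp} (differentiability, diagonalisability via symmetry, existence of the unstable eigenvalue, and the $\epsilon$-Lipschitz bound on the Taylor remainder via the local Lipschitz constant $L$ of $f'$, choosing $r_\epsilon = \min\{r_0,\epsilon/L\}$), then apply \cref{basin_union_measure_zero} and \cref{preserving_measure_zero_sets}. The only cosmetic difference is that you bound the remainder by the mean value inequality on $g' = f' - f'(z)$, whereas the paper uses the equivalent integral representation of the remainder; both yield the same $Lr$-Lipschitz estimate.
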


\begin{proof}
	The result is immediate from Corollary
\ref{basin_union_measure_zero} and Lemma \ref{preserving_measure_zero_sets} once we have verified that Assumption \ref{center_manifold_asmp} is fulfilled.
Let $z \in \SetSaddle$ and let $R(x) = f(x) - z - f'(z)(x-z)$ be the remainder term of the first-order Taylor expansion of $f$ around $z$.
We need to verify that for a given $\epsilon \in (0,\infty)$ we can take $r_{\epsilon} \in (0,\infty)$ so small that the restriction of $R$ to $\B_{r_{\epsilon}}(z)$ is $\epsilon$-Lipschitz continuous.
Take $r \in (0,\infty)$ so that $\B_r(z) \subseteq U$.
For all $x \in \B_r(z)$, note that
\begin{equation*}
	R(x) = \int_0^1 \big[ f'(z+s(x-z)) - f'(z) \big] (x-z) \, ds.
\end{equation*}%
Denote the Lipschitz constant of $f'$ on $\B_r(z)$ by $L \in [0,\infty)$
Then, for all $x,y \in \B_r(z)$,
\begin{equation*}
\begin{split}
	\norm{R(x) - R(y)}
	&= \norm{ \int_0^1 \big[ f'(z+s(x-z)) - f'(z) \big] (x-y) + \big[ f'(z+s(x-z)) - f'(z+s(y-z)) \big] (y-z) \, ds }
	\\
	&\leq \int_0^1 Ls \norm{x-z} \norm{x-y} + Ls \norm{x-y} \norm{y-z} \, ds
	\leq Lr \norm{x-y}.
\end{split}
\end{equation*}%
So, given $\epsilon \in (0,\infty)$, we pick $r_{\epsilon} = \min\{r,\epsilon L^{-1}\}$.
\end{proof}


\section{Applications}
\label{section_applications}

The prime applications we have in mind are gradient descent algorithms in machine learning frameworks.
Many of those frameworks can be posed as optimizing a loss function $\Loss \colon \R^d \rightarrow [0,\infty)$ that is a piecewise polynomial.
Indeed, we often encounter an empirical risk $\Loss \in C(\R^d,\R)$ on data points $(X_i,Y_i)_{i=1,\dots,M}$ given by
\begin{equation*}
	\Loss(x) = c \sum_{i=1}^M \norm{ p(x,X_i) - Y_i }^2
\end{equation*}
for a constant $c > 0$ and a piecewise polynomial $p$.
In particular, there is a set $U \subseteq \R^d$ of full measure on which $\Loss$ is smooth.
The same holds for the gradient descent map given by $f_{\stepsize}(x) = x - \stepsize \nabla\Loss(x)$ on $U$.
Furthermore, the map $P \colon U \times \R \rightarrow \R$ given by $P(x,\stepsize) = \det(I-\stepsize\nabla^2\Loss(x))$ is a piecewise polynomial, which is not constantly zero on any connected component of $U \times \R$.
Hence, its zero set has Lebesgue measure zero, which implies that for almost every $\stepsize \in (0,\infty)$ there exists a set $V \subseteq U$ of full measure on which $f_{\stepsize}'(x)$ is non-degenerate.
Now, we can apply Proposition \ref{GD_basin_measure_zero} to deduce that gradient descent almost surely avoids all strict saddle points of $\Loss$ in $U$.
Even if $p$ is smooth everywhere, the previous theoretical results from the literature surveyed in the introduction had not been applicable because the gradient of $\Loss$\ is not globally Lipschitz continuous and the Jacobian of the gradient descent map cannot be guaranteed to be non-degenerate.

Often times, a regularizer is added to the empirical risk.
But popular regularizers are themselves piecewise polynomials, so the above still applies.
Furthermore, the above argument also works if we consider an empirical $L^1$-loss instead of an $L^2$-loss due to almost everywhere differentiability of locally Lipschitz continuous functions on level sets; \cite{EvansGabriely2015}.
We survey some concrete examples below.

\subsection{Phase retrieval}

In phase retrieval, one seeks to recover a vector $x$ satisfying $(X_i^T x)^2 = Y_i$ for given vectors $X_i$ and scalars $Y_i$.
This amounts to taking the function $p(x,X_i) = (X_i^T x)^2$ in the empirical risk above.
Strictness of saddle points has been studied in \cite{SunQuWright2018} for the $L^2$-loss.
If we consider the $L^1$-loss instead, \cite{DavisDruPaq2020} considered a subgradient method to tackle this optimization problem.
Since the $L^1$-loss is smooth almost everywhere, the subgradient method in \cite{DavisDruPaq2020} agrees with a gradient descent algorithm almost everywhere, albeit with a varying stepsize in each iteration.
Proposition \ref{GD_basin_measure_zero} extends to certain non-autonomous dynamical systems that cover the case of gradient descent with a varying stepsize.

\subsection{Matrix recovery}

In matrix recovery, the function $p$ in the empirical risk takes the form $p(x,X_i) = x^T X_i$, where the input $x$ and the data point $X_i$ are $d_1 \times d_2$-dimensional matrices.
A problem often studied is low-rank matrix recovery, in which $p$ is restricted to the set of matrices of a given rank $r < \min(d_1,d_2)$.
This problem becomes more tractable upon replacing the domain $\{x \in \R^{d_1 \times d_2} \colon \mathrm{rank}(x) = r\}$ by $\R^{d_1 \times r} \times \R^{d_2 \times r}$ and setting $p(U,V,X_i) = V U^T X_i$.
There are other possibilities to parametrize the rank constraint and for which the curvature at saddle points has been studied;
\cite{BhoNeySre2016,GeJinZheng2017,SunQuWright2017a}.
As for phase retrieval, Proposition \ref{GD_basin_measure_zero} extends to cover subgradient methods used on the $L^1$-loss as done, for example, in \cite{LiZhuSoVidal2020}.

\subsection{Networks with piecewise polynomial activation}

Consider functions parametrized by networks $\R^d \rightarrow C(\R^m,\R^o)$, $\NNskel \mapsto \NNfct$ of a given architecture.
If the activation function in the networks is a piecewise polynomial, then $p(\NNskel,X_i) = \NNfct(X_i)$ is a piecewise polynomial.
Thus, networks embed in the above framework for any architecture and any input and output dimensions.
While empirical evidence suggests that many saddle points are strict (see \cite{DauPasGulChoGanBen2014}), this is proved only in some special cases; for example, for shallow quadratic networks \cite{DuLee2018,SolJavLee2019} and for deep linear networks \cite{BahRauTerWest2021,Kawaguchi2016}.
A more complete theoretical understanding of saddle points eludes us; \cite{ChoLeCunBen2015}.

In all of the above examples, the empirical risk was the objective to be optimized, which lend itself nicely for the framework of Proposition \ref{GD_basin_measure_zero}.
As the number of data points grows larger, the empirical risk converges to the true risk given by an integral of the error of the model against the true target.
In the next section, we study the true loss for a specialized model class, namely networks with one hidden layer and ReLU activation against a one-dimensional affine target function.
The true loss is no longer a piecewise polynomial but will be shown to be a rational function almost everywhere, which permits an argument similar as for the empirical risk.
More importantly, in this special framework, we have a much more explicit knowledge of the saddle points.
Note that for the empirical risk we deduced that gradient descent avoids saddle points of $\Loss$ in $U$.
However, there may be saddle points of interest outside of $U$.
In the next section, we showcase additional methods how to extend the result to saddle points outside of $U$.


\section{Gradient descent for shallow ReLU networks}
\label{section_network_training}

We now turn to studying shallow ReLU networks.
Throughout this section, suppose $d=3N+1$ for an $N \in \N$ and fix $\timeZero,\timeOne \in \R$ with $\timeZero < \timeOne$.
Then, $\R^d$ represents the space of all shallow networks with $N$ hidden neurons.
We will always write a network $\NNskel \in \R^{3N+1}$ as $\NNskel = (w,b,v,c)$, where $w,b,v \in \R^N$ and $c \in \R$.
The realization of a network $\NNskel$ is the function $\NNfct \in C(\R,\R)$ given by
\begin{equation}
\label{network_realization}
	\NNfct(x) = c + \sum_{j=1}^{N} v_j \max\{w_jx+b_j,0\}.
\end{equation}%
Fix $\target \in C([\timeZero,\timeOne],\R)$.
We denote by $\Loss \in C(\R^d,\R)$ the squared $L^2$-loss with target function $\target$, that is
\begin{equation}
\label{loss_function}
	\Loss(\NNskel) = \int_{\timeZero}^{\timeOne} ( \NNfct(x) - \target(x) )^2 \, dx.
\end{equation}%
To discuss regularity properties of the loss function, it is convenient to recall the following definition, which has been introduced in \cite{CheJenRos2022JNLS}.
Motivation and discussion of these notions can be found there.

\newpage

\begin{definition}
\label{types_of_neurons}
	Let $\NNskel = (w,b,v,c) \in \R^{3N+1}$ and $j \in \{1,\dots,N\}$.
Then, we denote by $I_j$ the set given by $I_j = \{x \in [\timeZero,\timeOne] \colon w_jx+b_j \geq 0 \}$, we say that the $j^{th}$ hidden neuron of $\NNskel$ is\newline
\begin{minipage}{0.49\linewidth}
\vspace{0.2cm}
\begin{itemize}\itemsep = -0.2em
\item {\itshape inactive} if $I_j = \emptyset$,
\item {\itshape semi-inactive} if $\# I_j = 1$,
\item {\itshape semi-active} if $w_j = 0 < b_j$,
\item {\itshape active} if $w_j \ne 0 < b_j + \max\{w_j\timeZero,w_j\timeOne\}$,
\item {\itshape type-1-active} if $w_j \ne 0 \leq b_j + \min\{w_j\timeZero,w_j\timeOne\}$,
\end{itemize}
\vspace{0cm}
\end{minipage}
\hfill
\begin{minipage}{0.49\linewidth}
\vspace{0.2cm}
\begin{itemize}\itemsep = -0.2em
\item {\itshape type-2-active} if $\emptyset \ne I_j \cap (\timeZero,\timeOne) \ne (\timeZero,\timeOne)$,
\item {\itshape degenerate} if $|w_j| + |b_j| = 0$,
\item {\itshape non-degenerate} if $|w_j| + |b_j| > 0$,
\item {\itshape flat} if $v_j = 0$,
\item {\itshape non-flat} if $v_j \ne 0$,
\end{itemize}
\vspace{0cm}
\end{minipage}
and we say that $t \in \R$ is the {\itshape breakpoint} of the $j^{th}$ hidden neuron of $\NNskel$ if $w_j \ne 0 = w_jt+b_j$.
\end{definition}

It was shown in \cite{CheJenRos2022JNLS} that $\Loss$ is differentiable at all coordinates corresponding to non-degenerate or flat degenerate neurons.
In general, the loss fails to be differentiable at non-flat degenerate neurons.
To apply the dynamical systems theory, we need a function defined on the whole $\R^d$.
Thus, we need to work with a generalized gradient of $\Loss$.
There are many different choices for such a generalized gradient.
Here, we actually do not specify a choice, but only require that our generalized gradient agrees with partial derivatives of $\Loss$ coordinate-wise.
So, throughout this section, let $\Grad \colon \R^d \rightarrow \R^d$ satisfy for all $\NNskel \in \R^d$ and all $j \in \{1,\dots,N\}$ such that the $j^{th}$ neuron of $\NNskel$ is non-degenerate or flat degenerate that
\begin{equation*}
	\Grad_j(\NNskel) = \frac{\partial}{\partial w_j} \Loss(\NNskel), \quad \Grad_{N+j}(\NNskel) = \frac{\partial}{\partial b_j} \Loss(\NNskel), \quad \Grad_{2N+j}(\NNskel) = \frac{\partial}{\partial v_j} \Loss(\NNskel), \quad \Grad_{3N+1}(\NNskel) = \frac{\partial}{\partial c} \Loss(\NNskel).
\end{equation*}%
The map $\Grad$ may take any values at coordinates of non-flat degenerate neurons.
The dynamical system we are interested in is the gradient descent step $f_{\stepsize}(\NNskel) = \NNskel - \stepsize \Grad(\NNskel)$ for some given stepsize $\stepsize \in (0,\infty)$.

One crucial aspect of Theorem \ref{center_manifold_thrm} is that we do not need the dynamical system to be a local diffeomorphism, let alone differentiable everywhere.
However, the dynamical system ought to be differentiable at the saddle points of $\Loss$ we are interested in.
Where $\Grad = \nabla\Loss$, differentiability of $f_{\stepsize}$ means two times differentiability of $\Loss$.
Even though $\Loss$ is twice differentiable on a set of full measure (see \cite{JenRie2022JMLR}), some of the saddle points lie outside of that full-measure set.
More precisely, it is semi-inactive neurons that cause the regularity problems.
The resulting nonexistence of the Hessian of $\Loss$ urges us to work with suitably modified dynamical systems.
The idea is to replace entries of $\Grad$ that correspond to semi-inactive neurons of a given saddle point but to keep the remaining entries as they are.
For technical reasons, a prescribed set $J \subseteq \{1,\dots,N\}$ of semi-inactive neurons is split into two subsets $J_+$ and $J_-$, each containing those semi-inactive neurons with $w_j>0$ and $w_j<0$, respectively.
The exact formula for the modified gradient $\Grad^J$ is given below.
The new dynamical system $f_{\stepsize,J}(\NNskel) = \NNskel - \stepsize \Grad^J(\NNskel)$ no longer coincides with the original gradient descent $f_{\stepsize}$, but we will be able to recover information about the dynamics of $f_{\stepsize}$ from $f_{\stepsize,J}$; see Lemma \ref{GD_regularity_increase} below.
Now, let $\J$ be the set
\begin{equation*}
	\J = \{ (J_+,J_-) \colon J_+,J_- \subseteq \{1,\dots,N\} \text{ such that } J_+ \cap J_- = \emptyset \}.
\end{equation*}%
In this general definition, $J_+$ and $J_-$ are any disjoint subsets of $\{1,\dots,N\}$ and only later they will carry the interpretation of representing semi-inactive neurons.
For any $J = (J_+,J_-) \in \J$, let $\Grad^J \colon \R^d \rightarrow \R^d$ satisfy for all $\NNskel \in \R^d$ and $j \in \{1,\dots,N\}$ that $\Grad^J_{3N+1}(\NNskel) = \Grad_{3N+1}(\NNskel)$ and
\begin{equation*}
	\left( \Grad^J_j , \Grad^J_{N+j} , \Grad^J_{2N+j} \right)(\NNskel) =
	\begin{cases}
		\left( \Grad_j , \Grad_{N+j} , \Grad_{2N+j} \right)(\NNskel) &\text{if } j \notin J_+ \cup J_- \text{ or } w_j = 0, \\
		2 \int_{t_j}^{\timeOne} ( v_jx , v_j , w_jx+b_j )(\NNfct(x)-\target(x)) \, dx  &\text{if } j \in J_+ \text{ and } w_j \ne 0, \\
		2 \int_{\timeZero}^{t_j} ( v_jx , v_j , w_jx+b_j )(\NNfct(x)-\target(x)) \, dx  &\text{if } j \in J_- \text{ and } w_j \ne 0.
	\end{cases}
\end{equation*}%
Note that $\Grad^{(\emptyset,\emptyset)} = \Grad$.
If a neuron $j \in J_+ \cup J_-$ is semi-inactive or type-2-active with the sign of $w_j$ matching the sign in the subscript of $J_{\pm}$, then $\Grad^J$ agrees with $\Grad$ in the coordinates of the $j^{th}$ neuron.
Thus, we did not actually change $\Grad$ at semi-inactive neurons with matching signs, but we changed $\Grad$ at inactive neurons in a way that $\Grad^J$ becomes differentiable at semi-inactive neurons (which are neighbored by inactive neurons).
In the next lemma, we leverage that the original dynamical system $f_{\stepsize}$ does not alter coordinates of inactive neurons to show how to infer dynamical information about $f_{\stepsize}$ from its modifications.

\begin{lemma}
\label{GD_regularity_increase}
	Let $\SetSaddle \subseteq \R^d$ and, for all $J = (J_+,J_-) \in \J$, let $\SetSaddle_J \subseteq \SetSaddle$ contain all networks $\NNskel \in \SetSaddle$ such that $J_+$ is exactly the set of neurons of $\NNskel$ that are semi-inactive with $w_j > 0$ and $J_-$ is exactly the set of neurons of $\NNskel$ that are semi-inactive with $w_j < 0$.
Then, $\SetSaddle = \bigcup_{J \in \J} \SetSaddle_J$ and
\begin{equation*}
	\left\{ \NNskel \in \R^d \colon \lim_{k \rightarrow \infty} f_{\stepsize}^k(\NNskel) \in \SetSaddle \right\} \subseteq \bigcup_{J \in \J} \bigcup_{n \in \N_0} f_{\stepsize}^{-n}\left( \left\{ \NNskel \in \R^d \colon \lim_{k \rightarrow \infty} f_{\stepsize,J}^k(\NNskel) \in \SetSaddle_J \right\} \right).
\end{equation*}%
\end{lemma}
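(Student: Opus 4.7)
The first assertion $\SetSaddle = \bigcup_{J \in \J} \SetSaddle_J$ is a partition statement: given any $\NNskel \in \SetSaddle$, letting $J_+$ and $J_-$ be the sets of indices of its semi-inactive neurons with $w_j > 0$ and with $w_j < 0$ gives a disjoint pair in $\J$ with $\NNskel \in \SetSaddle_{(J_+,J_-)}$, and the reverse inclusion is by definition of $\SetSaddle_J$. For the dynamical inclusion, I would fix $\NNskel$ with $\lim_{k \rightarrow \infty} f_\stepsize^k(\NNskel) = \NNskelAlt \in \SetSaddle$, let $J = (J_+,J_-) \in \J$ be the unique pair with $\NNskelAlt \in \SetSaddle_J$, and aim to produce an $n \in \N_0$ such that $\Grad^J(f_\stepsize^m(\NNskel)) = \Grad(f_\stepsize^m(\NNskel))$ for all $m \geq n$. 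Given such an $n$, an immediate induction on $k$ yields $f_{\stepsize,J}^k(f_\stepsize^n(\NNskel)) = f_\stepsize^{n+k}(\NNskel)$, whence $\lim_{k \rightarrow \infty} f_{\stepsize,J}^k(f_\stepsize^n(\NNskel)) = \NNskelAlt \in \SetSaddle_J$, placing $\NNskel$ in $f_\stepsize^{-n}$ of the set on the right-hand side.

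Producing $n$ is the heart of the argument. For $j \notin J_+ \cup J_-$ the equality $\Grad^J = \Grad$ at the coordinates of neuron $j$ is immediate from the definition of $\Grad^J$. For $j \in J_+$, the $j$-th neuron of $\NNskelAlt$ is semi-inactive with $w_j^* > 0$, which forces the breakpoint to equal $\timeOne$ (otherwise $w_j^* > 0$ would make the neuron type-1-active rather than semi-inactive). Convergence then gives, for large $m$, that the $j$-th neuron of $f_\stepsize^m(\NNskel)$ has $w_j^{(m)} > 0$ and breakpoint $t_j^{(m)} = -b_j^{(m)}/w_j^{(m)}$ close to $\timeOne$, hence strictly greater than $\timeZero$, which excludes both the wrong sign of $w_j$ and type-1-activity. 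The only remaining delicate case is inactivity; here I would invoke the crucial observation, flagged in the text preceding the lemma, that $f_\stepsize$ fixes all three coordinates of an inactive neuron (such a neuron is non-degenerate and contributes identically zero to $\Loss$ in a neighbourhood of its current parameters, so $\Grad_j = \Grad_{N+j} = \Grad_{2N+j} = 0$). If the $j$-th neuron were inactive at some time $m_0$, then it would stay inactive with unchanged $(w_j,b_j)$ for every later time, forcing $\NNskelAlt$ to have an inactive $j$-th neuron and contradicting $j \in J_+$. Consequently, for large $m$ the $j$-th neuron of $f_\stepsize^m(\NNskel)$ is either semi-inactive or type-2-active with $w_j^{(m)} > 0$, and in both cases the remark following the definition of $\Grad^J$ delivers $\Grad^J = \Grad$ on the three coordinates of neuron $j$. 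The case $j \in J_-$ is symmetric, and finiteness of $J_+ \cup J_-$ allows a single $n$ to handle all relevant indices simultaneously.

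The main obstacle is precisely this ruling-out of inactivity; it rests not on a quantitative estimate but on the qualitative fact that inactive neurons are absolute fixed points of the discrete-time dynamics. Without this invariance, a neuron could repeatedly cross between the inactive side and the semi-/type-2-active side of the saddle, and then $\Grad^J$ and $\Grad$ would fail to agree along the tail of the trajectory. Once $n$ is in hand, the induction and the passage to the limit are routine bookkeeping.
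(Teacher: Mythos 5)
Your proof is correct and follows essentially the same route as the paper's: both identify that $f_\stepsize$ fixes the coordinates of inactive neurons, use convergence to $\NNskelAlt$ together with the local structure near a semi-inactive neuron to show the relevant neurons eventually stay semi-inactive or type-2-active with matching sign, and then conclude $\Grad^J = \Grad$ along the tail so that $f_{\stepsize,J}$ and $f_\stepsize$ coincide from some time $n$ onward. (One minor slip: a semi-inactive neuron of $\NNskelAlt$ with $w_j^* > 0$ and breakpoint $\ne \timeOne$ would be type-2-active or inactive, not necessarily type-1-active as your parenthetical suggests; the conclusion $t_j^* = \timeOne$ is nonetheless correct and the argument stands.)
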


\begin{proof}
	That $\SetSaddle = \bigcup_{J \in \J} \SetSaddle_J$ is clear.
Suppose $\NNskel_0 \in \{ \NNskel \in \R^d \colon \lim_{k \rightarrow \infty} f_{\stepsize}^k(\NNskel) \in \SetSaddle \}$ and let $\NNskel_{\infty} \in \SetSaddle$ be the limit point of $f_{\stepsize}^k(\NNskel_0)$ as $k \rightarrow \infty$.
Take $J \in \J$ with $\NNskel_{\infty} \in \SetSaddle_J$ and abbreviate $\NNskel_k = f_{\stepsize}^k(\NNskel_0)$.
Note that $f_{\stepsize}$ does not change coordinates of inactive neurons.
More precisely, for all $j \in \{1,\dots,N\}$ and $k,n \in \N_0$, if the $j^{th}$ neuron of $\NNskel_n$ is inactive, then $\NNskel_n$ and $\NNskel_{n+k}$ agree in the $(w_j,b_j,v_j)$-coordinates.
Furthermore, any sufficiently small neighborhood of a semi-inactive neuron contains only inactive, semi-inactive, and type-2-active neurons.
It follows from this that there exists an $n \in \N_0$ such that for all $k \in \N_0$ and all $j \in J_+ \cup J_-$ the $j^{th}$ neuron of $\NNskel_{n+k}$ is type-2-active or semi-inactive with $\mathrm{sgn}(w_j)$ matching the subscript of $J_{\pm}$ in both cases.
Then, $\Grad^J(\NNskel_{n+k}) = \Grad(\NNskel_{n+k})$ for all $k \in \N_0$.
In particular, $\NNskel_{n+k} = f_{\stepsize,J}^k(\NNskel_n)$ for all $k \in \N_0$ and, hence, $\NNskel_n \in \{ \NNskel \in \R^d \colon \lim_{k \rightarrow \infty} f_{\stepsize,J}^k(\NNskel) \in \SetSaddle_J \}$.
\end{proof}

Subsequently, we need to accomplish two objectives.
First, we need to verify that the dynamical systems theory (Proposition \ref{GD_basin_measure_zero}) is applicable to each $f_{\stepsize,J}$ to deduce that the sets $\{ \NNskel \in \R^d \colon \lim_{k \rightarrow \infty} f_{\stepsize,J}^k(\NNskel) \in \SetSaddle_J \}$ have zero Lebesgue measure.
Secondly, we need to apply Lemma \ref{preserving_measure_zero_sets} to $f_{\stepsize}$ to conclude with the previous lemma that $\{ \NNskel \in \R^d \colon \lim_{k \rightarrow \infty} f_{\stepsize}^k(\NNskel) \in \SetSaddle \}$ also has zero Lebesgue measure.


\subsection{Non-degeneracy almost everywhere}
\label{section_nondegenerate}

In this section, we show that there exists an open subset of $\R^d$ of full measure such that the modified dynamical system exhibits the regularity required by Proposition \ref{GD_basin_measure_zero} on that subset.
For any $J = (J_+,J_-) \in \J$, let $U_0^J \subseteq \R^d$ be the set of all networks without degenerate neurons such that $w_j \ne 0$ for all $j \in J_+ \cup J_-$; let $U_1^J \subseteq \R^d$ be the set of all networks without degenerate neurons such that no neuron in $\{1,\dots,N\} \backslash (J_+ \cup J_-)$ is semi-inactive or type-1-active with breakpoint $\timeZero$ or $\timeOne$; and let $U^J = U_0^J \cap U_1^J$.
Let $U_{\infty} \subseteq U^{\emptyset,\emptyset} \cap U^{\{1,\dots,N\},\emptyset}$ be the set of all networks that do not have two distinct type-2-active neurons with the same breakpoint.
We remark that $U_{\infty} \subseteq \R^d$ is open and has full measure.

\begin{lemma}
\label{regularity_modified_gradient}
	Let $J = (J_+,J_-) \in \J$.
Then, the following properties hold.
\begin{enumerate}[\rmfamily (i)]\itemsep = 0em
\item $\Grad^J$ is continuously differentiable on $U^J$.

\item The Jacobian $(\Grad^J)'(\NNskel)$ is a symmetric matrix for all $\NNskel \in U^J$ for which for all $\tau \in \{+,-\}$ and $j \in J_{\tau}$ the $j^{th}$ neuron of $\NNskel$ is semi-inactive with $\mathrm{sign}(w_j) = \tau$.

\item If $\target$ is Lipschitz continuous, then the Jacobian of $\Grad^J$ is locally Lipschitz continuous on $U^J$.

\item If $\target$ is a polynomial, then $\Grad^J$ is a rational function\footnote{More precisely, $\Grad^J(\NNskel)$ equals $p(\NNskel)/q(\NNskel)$ for two polynomials $p$ and $q$ on each connected component but $p$ and $q$ could be different polynomials for different components.} on $U_{\infty}$.
\end{enumerate}
\end{lemma}

\begin{proof}
	The set $U_0^{(\emptyset,\emptyset)}$ is the set of all networks without degenerate neurons.
For all $j \in \{1,\dots,N\}$, we let $r_j,s_j \colon U_0^{(\emptyset,\emptyset)} \rightarrow \R$ be the functions given by
\begin{equation*}
	r_j(\NNskel) =
	\begin{cases}
		\frac{\timeZero+\timeOne}{2} - \frac{w_j(\timeOne-\timeZero)^2}{2w_j(\timeZero+\timeOne)+4b_j} &\text{if the } j^{th} \text{ neuron of } \NNskel \text{ is inactive}, \\
		\timeOne &\text{if the } j^{th} \text{ neuron of } \NNskel \text{ is semi-inactive with } w_j > 0, \\
		t_j &\text{if the } j^{th} \text{ neuron of } \NNskel \text{ is type-2-active with } w_j > 0, \\
		\timeZero &\text{otherwise}
	\end{cases}
\end{equation*}%
and
\begin{equation*}
	s_j(\NNskel) =
	\begin{cases}
		r_j(\NNskel) &\text{if the } j^{th} \text{ neuron of } \NNskel \text{ is inactive}, \\
		\timeZero &\text{if the } j^{th} \text{ neuron of } \NNskel \text{ is semi-inactive with } w_j < 0, \\
		t_j &\text{if the } j^{th} \text{ neuron of } \NNskel \text{ is type-2-active with } w_j < 0, \\
		\timeOne &\text{otherwise}.
	\end{cases}
\end{equation*}%
$r_j(\NNskel)$ and $s_j(\NNskel)$ are the endpoints of the interval $I_j$ if the $j^{th}$ neuron of $\NNskel$ is not inactive and $[r_j,s_j]$ is a singleton if it is inactive.
Observe that $r_j$ and $s_j$ are locally Lipschitz continuous and, for any connected component $V$ of $U_1^{(\{j\}^c,\emptyset)}$, the restrictions $r_j|_V$ and $s_j|_V$ are rational functions.
In particular, $r_j$ and $s_j$ are infinitely often differentiable on $U_1^{(\{j\}^c,\emptyset)}$.
Next, we define similar functions $r_j^J,s_j^J \colon U_0^J \rightarrow \R$ by
\begin{equation*}
	r_j^J(\NNskel) =
	\begin{cases}
		r_j(\NNskel) &\text{if } j \notin J_+ \cup J_-, \\
		t_j &\text{if } j \in J_+, \\
		\timeZero &\text{if } j \in J_-,
	\end{cases}
	\qquad
	s_j^J(\NNskel) =
	\begin{cases}
		s_j(\NNskel) &\text{if } j \notin J_+ \cup J_-, \\
		\timeOne &\text{if } j \in J_+, \\
		t_j &\text{if } j \in J_-.
	\end{cases}
\end{equation*}%
These functions are locally Lipschitz continuous on $U_0^J$ and infinitely often differentiable on $U^J$ because if $j \notin J_+ \cup J_-$, then $U^J \subseteq U_1^{(\{j\}^c,\emptyset)}$.
Now, for all $\NNskel \in U_0^J$, $j \in \{1,\dots,N\}$, $i \in \{0,1,2\}$,
\begin{equation*}
\begin{split}
	\Grad^J_{iN+j}(\NNskel) &= 2 \int_{r_j^J}^{s_j^J} \Big( \frac{\partial}{\partial \NNskel_{iN+j}} v_j(w_jx+b_j) \Big) (\NNfct(x)-\target(x)) \, dx, \\
	\Grad^J_{3N+1}(\NNskel) &= 2 \int_{\timeZero}^{\timeOne} (\NNfct(x)-\target(x)) \, dx.
\end{split}
\end{equation*}%
Thus, all partial derivatives of $\Grad^J$ exist on $U^J$ by the Leibniz integral rule and are given by
\begin{equation*}
\begin{split}
	\frac{\partial}{\partial \NNskel_{i'N+j'}} \Grad^J_{iN+j}(\NNskel)
	&= 2 \int_{r_j^J}^{s_j^J} \Big( \frac{\partial}{\partial \NNskel_{iN+j}} v_j(w_jx+b_j) \Big) \Big( \frac{\partial}{\partial \NNskel_{i'N+j'}} v_{j'}(w_{j'}x+b_{j'}) \Big) \IndFct{[r_{j'},s_{j'}]}(x) \, dx \\
	&\quad+ 2 \int_{r_j^J}^{s_j^J} \Big( \frac{\partial}{\partial \NNskel_{i'N+j'}} \frac{\partial}{\partial \NNskel_{iN+j}} v_j(w_jx+b_j) \Big) (\NNfct(x)-\target(x)) \, dx \\
	&\quad+ 2 \Big( \frac{\partial}{\partial \NNskel_{iN+j}} v_j(w_jx+b_j) \Big) (\NNfct(x)-\target(x))\Big|_{x = s_j^J} \Big( \frac{\partial}{\partial \NNskel_{i'N+j'}} s_j^J \Big) \\
	&\quad- 2 \Big( \frac{\partial}{\partial \NNskel_{iN+j}} v_j(w_jx+b_j) \Big) (\NNfct(x)-\target(x))\Big|_{x = r_j^J} \Big( \frac{\partial}{\partial \NNskel_{i'N+j'}} r_j^J \Big)
\end{split}
\end{equation*}
and
\begin{equation*}
\begin{split}
	\frac{\partial}{\partial \NNskel_{3N+1}} \Grad^J_{iN+j}(\NNskel) &= 2 \int_{r_j^J}^{s_j^J} \frac{\partial}{\partial \NNskel_{iN+j}} v_j(w_jx+b_j) \, dx, \\
	\frac{\partial}{\partial \NNskel_{iN+j}} \Grad^J_{3N+1}(\NNskel) &= 2 \int_{r_j}^{s_j} \frac{\partial}{\partial \NNskel_{iN+j}} v_j(w_jx+b_j) \, dx, \\
	\frac{\partial}{\partial \NNskel_{3N+1}} \Grad^J_{3N+1}(\NNskel) &= 2.
\end{split}
\end{equation*}%
In particular, all partial derivatives of $\Grad^J$ are continuous and, hence, $\Grad^J$ is continuously differentiable on $U^J$.
This proves (i).
Moreover, since $r_j$, $s_j$, $r_j^J$, $s_j^J$, and $\NNskel \mapsto \NNfct$ are locally Lipschitz continuous on $U_0^J$, it follows from the above formulas that if $\target$ is Lipschitz, then $(\Grad^J)'$ is locally Lipschitz on $U^J$.
Next, note the following equality, for all $j,j' \in \{1,\dots,N\}$, $i,i' \in \{0,1,2\}$, $\NNskel \in U_0^{(\{j,j'\},\emptyset)}$;
\begin{equation*}
	\Big( \frac{\partial}{\partial \NNskel_{iN+j}} v_j(w_jx+b_j) \Big)\Big|_{x = t_j} \frac{\partial}{\partial \NNskel_{i'N+j'}} t_j = \Big( \frac{\partial}{\partial \NNskel_{i'N+j'}} v_{j'}(w_{j'}x+b_{j'}) \Big)\Big|_{x = t_{j'}} \frac{\partial}{\partial \NNskel_{iN+j}} t_{j'}.
\end{equation*}%
Therefore, if $\NNskel \in U^J$ satisfies $r_j^J = r_ j$ and $s_j^J = s_j$ for all $j \in \{1,\dots,N\}$, then $(\Grad^J)'(\NNskel)$ is symmetric.
In particular, this holds for all $\NNskel \in U^J$ satisfying the conditions of (ii).

Now, suppose $\target$ is a polynomial.
For any $\NNskel \in U_0^J$, $j \in \{1,\dots,N\}$, $i \in \{0,1,2\}$, we can write
\begin{equation*}
\begin{split}
	\Grad^J_{iN+j}(\NNskel) &= 2 \int_{r_j^J}^{s_j^J} \Big( \frac{\partial}{\partial \NNskel_{iN+j}} v_j(w_jx+b_j) \Big) (c-\target(x)) \, dx \\
	&\quad + 2 \sum_{n=1}^{N} \int_{r_j^J}^{s_j^J} \Big( \frac{\partial}{\partial \NNskel_{iN+j}} v_j(w_jx+b_j) \Big) v_n(w_nx+b_n) \IndFct{[r_n,s_n]}(x) \, dx.
\end{split}
\end{equation*}%
The functions $P_{i,j} \colon \R^d \times [\timeZero,\timeOne] \rightarrow \R$ given by
\begin{equation*}
\begin{split}
	P_{i,j}(\NNskel,x) &= \int_0^x \Big( \frac{\partial}{\partial \NNskel_{iN+j}} v_j(w_jy+b_j) \Big) (c-\target(y)) \, dy \\
	&= \frac{\partial}{\partial \NNskel_{iN+j}} \left( \frac{1}{2}v_jw_jcx^2 + v_jb_jcx - v_j(w_jx+b_j) \int_0^x \target(y) \, dy + v_jw_j \int_0^x \int_0^y \target(z) \, dzdy \right)
\end{split}
\end{equation*}%
are polynomials.
By definition of these functions, for any $\NNskel \in U_0^J$,
\begin{equation*}
	\int_{r_j^J}^{s_j^J} \Big( \frac{\partial}{\partial \NNskel_{iN+j}} v_j(w_jx+b_j) \Big) (c-\target(x)) \, dx = P_{i,j}(\NNskel,s_j^J) - P_{i,j}(\NNskel,r_j^J).
\end{equation*}%
For any $j \in \{1,\dots,N\}$ and any connected component $V$ of $U^{\emptyset,\emptyset} \cap U^J$, the functions $r_j^J$ and $s_j^J$ equal each other, are both constant, or one is constant and one equals $t_j = -b_j/w_j$ throughout that entire component.
In particular, one of the following four cases holds for all $\NNskel \in V$:
\begin{equation*}
\begin{split}
	P_{i,j}(\NNskel,s_j^J) - P_{i,j}(\NNskel,r_j^J)
	=
	\begin{cases}
	P_{i,j}(\NNskel,\timeOne) - P_{i,j}(\NNskel,\timeZero), \\
	P_{i,j}(\NNskel,t_j) - P_{i,j}(\NNskel,\timeZero), \\
	P_{i,j}(\NNskel,\timeOne) - P_{i,j}(\NNskel,t_j), \\
	0.
	\end{cases}
\end{split}
\end{equation*}%
Since $U^{\emptyset,\emptyset} \cap U^J$ has only finitely many connected components, it follows that we can take $q \in \N$ sufficiently large so that
\begin{equation*}
	\NNskel \mapsto \left( \prod_{k=1}^N w_k^q \right) \left( P_{i,j}(\NNskel,s_j^J) - P_{i,j}(\NNskel,r_j^J) \right)
\end{equation*}%
is a polynomial on $U^{\emptyset,\emptyset}$.
The remainder of the proof is similar to the previous step.
The functions $P_{i,j,n} \colon \R^d \times [\timeZero,\timeOne] \rightarrow \R$ given by
\begin{equation*}
\begin{split}
	P_{i,j,n}(\NNskel,x) &= \int_0^x \Big( \frac{\partial}{\partial \NNskel_{iN+j}} v_j(w_jy+b_j) \Big) v_n(w_ny+b_n) \, dy \\
	&= \frac{1}{6} \Big( \frac{\partial}{\partial \NNskel_{iN+j}} v_jw_j \Big) ( 2v_nw_nx^3 + 3v_nb_nx^2 ) + \frac{1}{2} \Big( \frac{\partial}{\partial \NNskel_{iN+j}} v_jb_j \Big) ( v_nw_nx^2 + 2v_nb_nx )
\end{split}
\end{equation*}%
are polynomials.
Given any $j,n \in \{1,\dots,N\}$ and any connected component $V$ of $U^{\emptyset,\emptyset} \cap U^J$, if the $j^{th}$ and $n^{th}$ neuron are not both type-2-active for some (and, hence, any) network in $V$, then one of the following three cases holds for all $\NNskel \in V$:
\begin{equation*}
\begin{split}
	\int_{r_j^J}^{s_j^J} \Big( \frac{\partial}{\partial \NNskel_{iN+j}} v_j(w_jx+b_j) \Big) v_n(w_nx+b_n) \IndFct{[r_n,s_n]}(x) \, dx
	=
	\begin{cases}
	P_{i,j,n}(\NNskel,s_j^J) - P_{i,j,n}(\NNskel,r_j^J), \\
	P_{i,j,n}(\NNskel,s_n) - P_{i,j,n}(\NNskel,r_n), \\
	0.
	\end{cases}
\end{split}
\end{equation*}%
Now, suppose the $j^{th}$ and $n^{th}$ neuron are both type-2-active.
Since the definition of $U_{\infty}$ excludes that the breakpoints $t_j$ and $t_n$ cross each other, one of the following five cases holds throughout a given connected component of $U_{\infty}$:
\begin{equation*}
\begin{split}
	\int_{r_j^J}^{s_j^J} \Big( \frac{\partial}{\partial \NNskel_{iN+j}} v_j(w_jx+b_j) \Big) v_n(w_nx+b_n) \IndFct{[r_n,s_n]}(x) \, dx
	=
	\begin{cases}
	P_{i,j,n}(\NNskel,s_j^J) - P_{i,j,n}(\NNskel,r_n), \\
	P_{i,j,n}(\NNskel,s_j^J) - P_{i,j,n}(\NNskel,r_j^J), \\
	P_{i,j,n}(\NNskel,s_n) - P_{i,j,n}(\NNskel,r_j^J), \\
	P_{i,j,n}(\NNskel,s_n) - P_{i,j,n}(\NNskel,r_n), \\
	0.
	\end{cases}
\end{split}
\end{equation*}%
This implies that
\begin{equation*}
	\NNskel \mapsto \left( \prod_{k=1}^N w_k^q \right) \int_{r_j^J}^{s_j^J} \Big( \frac{\partial}{\partial \NNskel_{iN+j}} v_j(w_jx+b_j) \Big) v_n(w_nx+b_n) \IndFct{[r_n,s_n]}(x) \, dx
\end{equation*}%
is a polynomial on $U_{\infty}$ for a sufficiently large $q \in \N$.
We conclude that also $\NNskel \mapsto \Grad_{iN+j}^J(\NNskel) \prod_{k=1}^N w_k^q$ is a polynomial on $U_{\infty}$ for a sufficiently large $q \in \N$.
The same argument in a simplified version works for $\Grad^J_{3N+1}(\NNskel) = 2 \int_{\timeZero}^{\timeOne} (\NNfct(x)-\target(x)) \, dx$.
\end{proof}

The last bit of regularity we need to check is the non-degeneracy of the Jacobian $f_{\stepsize,J}'$.
Lemma \ref{regularity_modified_gradient}.(iv) enables this.
The proof of the next lemma uses the argument sketched in Section \ref{section_applications}.

\begin{lemma}
\label{nondegenerate_Jacobian}
	If $\target$ is a polynomial, then for almost all $\stepsize \in (0,\infty)$ and for all $J \in \J$ there exists an open set $U^J_{\stepsize} \subseteq U_{\infty}$ of full measure such that $\det(f_{\stepsize,J}'(\NNskel)) \ne 0$ for all $\NNskel \in U^J_{\stepsize}$.
\end{lemma}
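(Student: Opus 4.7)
The strategy is to combine the rationality furnished by \cref{regularity_modified_gradient}.(iv) with a Fubini argument in the stepsize variable. Fix $J \in \J$ and fix a connected component $V$ of $U_\infty$. By \cref{regularity_modified_gradient}.(iv), the map $\Grad^J$ is a rational function on $U_\infty$, and hence its Jacobian $(\Grad^J)'$ is also a (matrix-valued) rational function on $V$ (differentiating a ratio of polynomials yields a ratio of polynomials, with denominator nonzero on $U_\infty$ by construction). Consequently, writing $f_{\stepsize,J}'(\NNskel) = I - \stepsize (\Grad^J)'(\NNskel)$, the function
\[
	p_J(\stepsize, \NNskel) := \det(f_{\stepsize,J}'(\NNskel)) = \det\!\bigl(I - \stepsize\, (\Grad^J)'(\NNskel)\bigr)
\]
is, on $\R \times V$, a polynomial in $\stepsize$ of degree at most $d$ whose coefficients are rational functions of $\NNskel \in V$.

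The decisive observation is that $p_J(0, \NNskel) = \det(I) = 1$ for every $\NNskel \in V$. Thus, for each fixed $\NNskel \in V$, the polynomial $\stepsize \mapsto p_J(\stepsize, \NNskel)$ is not identically zero and therefore has at most $d$ real roots. Applying Tonelli's theorem to the indicator of the set
\[
	B_{V,J} := \{ (\stepsize, \NNskel) \in (0,\infty) \times V : p_J(\stepsize, \NNskel) = 0 \},
\]
the $\NNskel$-sections all being finite gives $B_{V,J}$ Lebesgue measure zero in $\R^{d+1}$. Hence, by Tonelli again in the opposite order, there exists a null set $N_{V,J} \subseteq (0,\infty)$ such that for every $\stepsize \in (0,\infty) \setminus N_{V,J}$, the horizontal slice $\{\NNskel \in V : p_J(\stepsize, \NNskel) = 0 \}$ has Lebesgue measure zero in $V$.

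Since $U_\infty$ is open in $\R^d$, it has at most countably many connected components, and $\J$ is finite. Taking the union of the $N_{V,J}$ over all such components $V$ and all $J \in \J$ yields a single null set $N \subseteq (0,\infty)$ such that for $\stepsize \in (0,\infty) \setminus N$ and every $J \in \J$, the set $\{\NNskel \in U_\infty : p_J(\stepsize, \NNskel) = 0 \}$ is a countable union of Lebesgue null sets, and hence null. Finally, because $p_J(\stepsize, \cdot)$ is rational, in particular continuous, on each connected component of $U_\infty$, the nonvanishing set $U^J_\stepsize := \{\NNskel \in U_\infty : p_J(\stepsize, \NNskel) \ne 0\}$ is open in $U_\infty$, hence open in $\R^d$, and has full measure.

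The one nontrivial bookkeeping step is verifying that the rationality in \cref{regularity_modified_gradient}.(iv) descends to $(\Grad^J)'$; this is immediate from the product rule since the denominator $\prod_k w_k^q$ appearing in the lemma's proof is nonvanishing on $U_\infty$. Beyond that, the argument is a standard Fubini-type argument establishing that a nondegenerate polynomial family of matrices is invertible for almost every value of the parameter; no further combinatorial subtlety concerning neuron types is needed because \cref{regularity_modified_gradient}.(iv) has already encapsulated that information in the form of rationality on $U_\infty$.
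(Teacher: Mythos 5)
Your proof is correct and takes essentially the same approach as the paper: both exploit $\cref{regularity_modified_gradient}$.(iv), observe that the determinant is nonzero at $\stepsize=0$, conclude that the joint zero set in $(\NNskel,\stepsize)$-space is Lebesgue null, and then apply Tonelli to extract a null set of stepsizes. The only minor implementation difference is that you argue the joint set is null by finiteness of the $\stepsize$-roots for each fixed $\NNskel$, whereas the paper first clears denominators to get a bivariate polynomial $P(\NNskel,\stepsize)=\det(p(\NNskel)^2 f_{\stepsize,J}'(\NNskel))$ and invokes that the zero set of a nonvanishing polynomial has measure zero.
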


\begin{proof}
	Fix $J \in \J$.
Since $\Grad^J$ is a rational function on $U_{\infty}$, there exists a polynomial\footnote{Here, $p$ is a polynomial on each connected component but need not be given by the same formula on each component.} $p \colon U_{\infty} \rightarrow \R$, which is not constantly zero on any connected component of $U_{\infty}$, such that $\NNskel \mapsto p(\NNskel) \Grad^J(\NNskel)$ is a polynomial on $U_{\infty}$.
Since the derivative of a polynomial is still a polynomial, it follows that
\begin{equation*}
	\NNskel \mapsto p(\NNskel)^2 (\Grad^J)'(\NNskel) = (pp\Grad^J)'(\NNskel) - 2 p(\NNskel) p'(\NNskel) \Grad^J(\NNskel)
\end{equation*}%
is also a polynomial on $U_{\infty}$.
The differential of $f_{\stepsize,J}$ on $U_{\infty}$ is $f_{\stepsize,J}'(\NNskel) = I - \stepsize (\Grad^J)'(\NNskel)$.
Therefore, the map $P \colon U_{\infty} \! \times \R \rightarrow \R$ given by $P(\NNskel,\stepsize) = \det(p(\NNskel)^2 f_{\stepsize,J}'(\NNskel))$ is a polynomial.
Moreover, $P$ is not constantly zero on any connected component of $U_{\infty} \! \times \R$ because $P(\NNskel,0) = p(\NNskel)^2$.
In particular, its zero set $P^{-1}(0)$ has Lebesgue measure zero.
For every $\stepsize \in \R$, denote $Z_{\stepsize} = \{\NNskel \in U_{\infty} \colon P(\NNskel,\stepsize) = 0\}$.
By Tonelli's theorem,
\begin{equation*}
	0 = \int_{P^{-1}(0)} d\NNskel d\stepsize = \int_{\R} \int_{Z_{\stepsize}} d\NNskel d\stepsize,
\end{equation*}%
from which it follows that $Z_{\stepsize}$ has zero Lebesgue measure for almost every $\stepsize \in \R$.
Set $U_{\stepsize}^J = U_{\infty} \backslash Z_{\stepsize}$.
\end{proof}

This concludes the discussion of the regularity requirements.
It remains to establish strictness of saddle points of $\Loss$.


\subsection{Strict saddle points}
\label{section_strictness}

To investigate saddle points of $\Loss$, it is useful to classify them in terms of their types of neurons.
The next result follows from \cite[Theorem 2.4 and Corollary 2.7]{CheJenRos2022JNLS}.

\begin{proposition}
\label{classification}
	Assume $\target \colon [\timeZero,\timeOne] \rightarrow \R$ is affine but not constant and let $\NNskel = (w,b,v,c) \in U_0^{(\emptyset,\emptyset)}$ be a critical point of $\Loss$ that is not a global minimum.
Then, the following hold:
\begin{enumerate}[\rmfamily (i)]\itemsep = 0em

\item $\NNskel$ is not a local maximum of $\Loss$.

\item $\NNskel$ is a local minimum of $\Loss$ if and only if $c = \target(\frac{\timeZero+\timeOne}{2})$ and, for all $j \in \{1,\dots,N\}$, the $j^{th}$ hidden neuron of $\NNskel$ is inactive or semi-inactive with $\target'(\timeZero) v_j w_j < 0$.

\item $\NNskel$ is a saddle point of $\Loss$ if and only if $c = \target(\frac{\timeZero+\timeOne}{2})$, $\NNskel$ does not have any type-1-active or non-flat semi-active neurons, and exactly one of the following two conditions holds:
\begin{enumerate}[\rmfamily (a)]\itemsep = 0em

\item $\NNskel$ does not have any type-2-active neurons and there exists $j \in \{1,\dots,N\}$ such that the $j^{th}$ hidden neuron of $\NNskel$ is flat semi-active or semi-inactive with $\target'(\timeZero) v_j w_j \geq 0$.

\item There exists $n \in \{2,4,6,\dots\}$ such that $(\bigcup_{j \in \{1,\dots,N\},\, w_j \ne 0} \{-\frac{b_j}{w_j}\}) \cap (\timeZero,\timeOne) = \bigcup_{i=1}^{n} \{\timeZero+\frac{i(\timeOne-\timeZero)}{n+1}\}$ and, for all $j \in \{1,\dots,N\}$, $i \in \{1,\dots,n\}$ with $w_j \ne 0 = b_j + w_j(\timeZero+\frac{i(\timeOne-\timeZero)}{n+1})$, it holds that $\mathrm{sign}(w_j) = (-1)^{i+1}$ and $\sum_{k \in \{1,\dots,N\},\, w_k \ne 0 = b_k + w_k(\timeZero+\frac{i(\timeOne-\timeZero)}{n+1})}{} v_k w_k = \frac{2\target'(\timeZero)}{n+1}$.

\end{enumerate}

\item There exists $n \in \{0,2,4,\dots\}$ with $n \leq N$ such that
$\displaystyle{
\Loss(\NNskel) = \frac{[\target'(\timeZero)]^2 (\timeOne-\timeZero)^3}{12(n+1)^4}
}$ and
\begin{equation*}
	\NNfct(x) = \target(x) - \frac{(-1)^i \target'(\timeZero)}{n+1} \Big( x - \timeZero - \frac{(i+\frac{1}{2})(\timeOne-\timeZero)}{n+1} \Big)
\end{equation*}%
for all $i \in \{0,\dots,n\}$, $x \in [\timeZero+\frac{i(\timeOne-\timeZero)}{n+1},\timeZero+\frac{(i+1)(\timeOne-\timeZero)}{n+1}]$.

\end{enumerate}
\end{proposition}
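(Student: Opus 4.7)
The statement is essentially a translation of Theorem~2.4 and Corollary~2.7 of \cite{CheJenRos2022JNLS} into the notation of the present paper, so the proof plan is to invoke those results and to verify that the four conclusions here follow from the structural information they provide. The bridge between the two papers is the set of first-order conditions $\partial_{w_j}\Loss = \partial_{b_j}\Loss = \partial_{v_j}\Loss = \partial_c\Loss = 0$, which hold coordinate-wise at $\NNskel \in U_0^{(\emptyset,\emptyset)}$ since all neurons there are non-degenerate. Explicit differentiation (analogous to the computation carried out in the proof of \cref{regularity_modified_gradient}) translates these conditions into orthogonality of the residual $\NNfct - \target$ against $v_jx$, $v_j$, and $w_jx + b_j$ on each activity interval $I_j$, together with $\int_{\timeZero}^{\timeOne}(\NNfct - \target)\,dx = 0$.

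Combining these orthogonality conditions with affinity of $\target$ and the continuous piecewise-affine shape of $\NNfct$, the cited theorem yields the dichotomy: either $\NNfct \equiv \target$ on $[\timeZero,\timeOne]$ (excluded since that is a global minimum), or there is an even $n \in \{0,2,4,\dots\}$ with $n \leq N$ such that the breakpoints $t_j = -b_j/w_j$ of the active neurons lying in $(\timeZero,\timeOne)$ equal $\{\timeZero + i(\timeOne-\timeZero)/(n+1) : i = 1,\dots,n\}$, the signs of the $w_j$ at consecutive breakpoints alternate as $(-1)^{i+1}$, the aggregate slope contribution $\sum v_kw_k$ at each breakpoint equals $\pm 2\target'(\timeZero)/(n+1)$, and $c = \target((\timeZero+\timeOne)/2)$. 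This forces the displayed piecewise-linear formula for $\NNfct$ in (iv). Setting $h = (\timeOne-\timeZero)/(n+1)$ and integrating the resulting sawtooth residual yields $\Loss(\NNskel) = (n+1)\int_0^h [\target'(\timeZero)/(n+1)]^2 (x - h/2)^2 \, dx = [\target'(\timeZero)]^2(\timeOne-\timeZero)^3 / (12(n+1)^4)$, as claimed.

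To establish (i)--(iii) I would combine this structural information with explicit perturbation arguments. Perturbing $c$ alone by $\delta$ changes $\Loss$ by $(\timeOne-\timeZero)\delta^2 > 0$ since the residual has mean zero, so $\NNskel$ is never a local maximum, giving (i). For (ii), in the case $n = 0$ with every semi-inactive neuron satisfying $\target'(\timeZero) v_jw_j < 0$, any small perturbation of a semi-inactive neuron either keeps it inactive (leaving $\NNfct$ unchanged on $[\timeZero,\timeOne]$) or activates it in a direction that strictly increases $\Loss$ by a direct sign computation; conjoined with orthogonality-preserving directions, this characterizes local minima. For (iii), combining (i), (ii), and the cited corollary shows that the remaining critical points are precisely the saddle points listed in (a) and (b), with (a) capturing the case $n = 0$ with an unfavourable semi-inactive or flat semi-active neuron, and (b) capturing the case $n \geq 2$.

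The main obstacle would be the detailed case analysis for boundary neuron types --- semi-inactive, flat semi-active, and type-1-active with breakpoint at $\timeZero$ or $\timeOne$ --- which governs whether a given perturbation produces strict descent or only higher-order behavior. This bookkeeping is exactly the technical content of \cite[Theorem~2.4 and Corollary~2.7]{CheJenRos2022JNLS}, so the cleanest strategy is to invoke those results and translate their conclusions into the present notation rather than reprove them from scratch.
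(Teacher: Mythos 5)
Your proposal is correct and takes the same approach as the paper: the paper gives no standalone proof but simply states that the proposition follows from \cite[Theorem~2.4 and Corollary~2.7]{CheJenRos2022JNLS}, which is exactly the strategy you settle on. Your additional commentary on the orthogonality conditions, the sawtooth residual computation for part (iv), and the perturbation heuristics for parts (i)--(iii) is a reasonable sketch of what the cited results contain, but the paper itself does not reproduce any of that and relies entirely on the citation.
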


Now, we can clarify for which saddle points of $\Loss$ we can establish strictness.
For an affine target function $\target$, let $\SetSaddle \subseteq U_0^{(\emptyset,\emptyset)}$ be the set of all saddle points of $\Loss$ that are not solely comprised of inactive neurons and semi-inactive neurons with $\target'(\timeZero)v_jw_j \leq 0$.
As in Lemma \ref{GD_regularity_increase}, for all $J \in \J$, let $\SetSaddle_J \subseteq \SetSaddle$ be the set of networks $\NNskel$ such that $J_+$ is exactly the set of neurons of $\NNskel$ that are semi-inactive with $w_j>0$ and $J_-$ is exactly the set of neurons of $\NNskel$ that are semi-inactive with $w_j<0$.

Recall that $f_{\stepsize,J}(\NNskel) = \NNskel - \stepsize \Grad^J(\NNskel)$.
Thus, to show that $f_{\stepsize,J}'(\NNskel) = I - \stepsize (\Grad^J)'(\NNskel)$ has an eigenvalue of absolute value strictly greater than 1, it is sufficient to show that $(\Grad^J)'(\NNskel)$ has a strictly negative eigenvalue.

\begin{lemma}
\label{saddle_points_are_strict}
	Assume $\target$ is affine but not constant, and let $J \in \J$, $\NNskel \in \SetSaddle_J$.
Then, $\Grad^J(\NNskel) = 0$ and the matrix $(\Grad^J)'(\NNskel)$ has a strictly negative eigenvalue.
\end{lemma}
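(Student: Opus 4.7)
The plan is to prove the two claims separately, using the classification from Proposition~\ref{classification} and the integral formulas from the proof of Lemma~\ref{regularity_modified_gradient}.

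For the identity $\Grad^J(\NNskel)=0$, I would argue coordinate by coordinate. Since $\NNskel \in \SetSaddle \subseteq U_0^{(\emptyset,\emptyset)}$, the network has no degenerate neurons and is a critical point, so $\Grad(\NNskel)=\nabla\Loss(\NNskel)=0$. For $j \notin J_+ \cup J_-$ and for the bias coordinate $3N+1$, $\Grad^J$ agrees with $\Grad$ by definition, so these coordinates vanish. For $j \in J_+$, by definition of $\SetSaddle_J$ the $j$-th neuron is semi-inactive with $w_j>0$; semi-inactivity with $w_j>0$ forces $I_j=\{\timeOne\}$, hence $t_j=-b_j/w_j=\timeOne$, and the integral $\int_{t_j}^{\timeOne}(\cdots)\,dx$ defining the $(w_j,b_j,v_j)$-coordinates of $\Grad^J$ collapses to zero. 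The case $j \in J_-$ is symmetric with $t_j=\timeZero$.

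For the negative eigenvalue, I would use that $(\Grad^J)'(\NNskel)$ is symmetric (Lemma~\ref{regularity_modified_gradient}(ii) applies, since the sign conditions on $J_\pm$ are exactly those defining $\SetSaddle_J$) and construct a vector $\xi \in \R^d$ with $\xi^T(\Grad^J)'(\NNskel)\xi<0$. The direction $\xi$ depends on which branch of Proposition~\ref{classification}(iii) produced the saddle. In case~(a), the restriction to $\SetSaddle$ forces the existence of either a flat semi-active neuron or a semi-inactive neuron $j$ with $\target'(\timeZero)v_jw_j>0$. For such a semi-inactive neuron with $j \in J_+$, I would take $\xi$ to be a perturbation of $b_j$ (inward, so that the neuron would locally become type-2-active). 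The key point is that $\Grad^J$ was designed so that, in a neighborhood of $\NNskel$ inside $U^J$, it coincides with the true loss gradient after the neuron has crossed to type-2-active; thus the quadratic form equals the second variation of $\Loss$ along the direction that straightens this neuron, which by the explicit integral formulas reduces (using affineness of $\target$) to a negative multiple of $\target'(\timeZero)v_jw_j$. For a flat semi-active neuron, an analogous calculation in the $(v_j,w_j)$ pair produces an off-diagonal Hessian entry that yields negativity of the quadratic form on a two-dimensional subspace. In case~(b) one has type-2-active neurons with breakpoints at the equispaced points $\timeZero+i(\timeOne-\timeZero)/(n+1)$ and prescribed sums $\sum v_kw_k=2\target'(\timeZero)/(n+1)$; the direction $\xi$ coordinatedly shifts these breakpoints, and the second variation of $\Loss$ under such a shift is controlled by the same sum, which is nonzero, yielding negativity.

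The main obstacle will be the explicit second-derivative computations, particularly verifying the sign of the quadratic form after integration against the affine target. The partial derivative formulas in the proof of Lemma~\ref{regularity_modified_gradient} provide the explicit expression for $(\Grad^J)'(\NNskel)$, and at a critical point the volume terms $(\NNfct(x)-\target(x))$ restricted to the neuron's support contribute predictably because $\target$ is affine and the classification pins down $\NNfct$ on each subinterval up to an explicit piecewise-affine form (Proposition~\ref{classification}(iv)). Combining this with the fact that $\NNfct-\target$ has a known sign pattern between consecutive breakpoints is what makes the sign of $\xi^T(\Grad^J)'(\NNskel)\xi$ computable, and ultimately negative, in both cases.
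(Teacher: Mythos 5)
Your plan matches the paper's proof in both structure and substance. The first part (coordinate-wise vanishing of $\Grad^J(\NNskel)$) is identical: for $j \notin J_+ \cup J_-$ you use criticality of $\NNskel$, and for $j \in J_\pm$ you use that semi-inactivity with matching sign forces $t_j$ to coincide with an endpoint, collapsing the defining integral. For the second part, the paper likewise splits by whether $\NNfct$ is affine on $[\timeZero,\timeOne]$ (your branch~(a)) or not (your branch~(b)); it establishes symmetry from \cref{regularity_modified_gradient}(ii) and then exhibits a strictly negative principal minor, which is equivalent to your strategy of exhibiting $\xi$ with $\xi^T(\Grad^J)'(\NNskel)\xi < 0$ by Sylvester-type interlacing. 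The explicit computations match: for a semi-inactive neuron with $\target'(\timeZero)v_jw_j > 0$, the paper computes $\frac{\partial}{\partial b_j}\Grad^J_{N+j}(\NNskel) = -\target'(\timeZero)\frac{v_j}{w_j}(\timeOne-\timeZero)$, which is indeed a negative multiple of $\target'(\timeZero)v_jw_j$ as you predicted; for a flat semi-active neuron, the paper's $2\times 2$ minor in the $(w_j,v_j)$-block has zero $(w_j,w_j)$-entry and nonzero off-diagonal, so the determinant is negative as you anticipated.

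The one place where your proposal and the paper diverge is case~(b). You outline a direct second-variation computation for shifting breakpoints of type-2-active neurons, whereas the paper invokes \cite[Lemma~2.24]{CheJenRos2022JNLS}, which already supplies a set of type-2-active coordinates for which the Hessian minor is strictly negative; the paper then only needs to note that those coordinates lie in $\{1,\dots,N\}\setminus(J_+\cup J_-)$, so $(\Grad^J)'$ agrees with the true Hessian of $\Loss$ there. Your sketch is plausible but not carried out; given that this is exactly the nontrivial content of the cited lemma (whose proof is several pages), it is worth being aware that this is where the real work hides, and that the paper deliberately outsources it rather than reproving it.
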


\begin{proof}
	On the one hand, for all $j \notin J_+ \cup J_-$ and $i \in \{0,1,2\}$, we know that $\Grad^J_{iN+j}(\NNskel) = \Grad_{iN+j}(\NNskel) = 0$.
On the other hand, for all $j \in J_+$, we have that $t_j = \timeOne$ and, hence, also $\Grad^J_{iN+j}(\NNskel) = 0$ for all $i \in \{0,1,2\}$; likewise for $j \in J_-$.
This shows that $\Grad^J(\NNskel) = 0$.

Proposition \ref{classification} tells us that $\NNskel$ has no type-1-active neurons, so $\NNskel \in U^J$ and $\Grad^J$ is differentiable at $\NNskel$ with symmetric Jacobian $(\Grad^J)'(\NNskel)$ by Lemma \ref{regularity_modified_gradient}.
We will conclude the proof by showing that $(\Grad^J)'(\NNskel)$ contains a strictly negative principle minor.
To this end, we distinguish two cases.
First, if $\NNfct$ is affine on $[\timeZero,\timeOne]$, then $\NNskel$ must have a flat semi-active neuron or a semi-inactive neuron with $\target'(\timeZero) v_jw_j>0$, by Proposition \ref{classification} and by the definition of the set $\SetSaddle$.
If $\NNskel$ has a flat semi-active neuron $j$, then
\begin{equation*}
\begin{split}
	\det
	\begin{pmatrix}
	\frac{\partial}{\partial \NNskel_j} \Grad^J_j(\NNskel) & \frac{\partial}{\partial \NNskel_j} \Grad^J_{2N+j}(\NNskel) \\
	\frac{\partial}{\partial \NNskel_{2N+j}} \Grad^J_j(\NNskel) & \frac{\partial}{\partial \NNskel_{2N+j}} \Grad^J_{2N+j}(\NNskel)
	\end{pmatrix}
	&=
	\det
	\begin{pmatrix}
	0 & -\frac{1}{6} \target'(\timeZero) (\timeOne-\timeZero)^3 \\
	-\frac{1}{6} \target'(\timeZero) (\timeOne-\timeZero)^3 & 2b_j^2 (\timeOne-\timeZero)
	\end{pmatrix}
	\\
	&= -\frac{1}{36} [\target'(\timeZero)]^2 (\timeOne-\timeZero)^6 < 0.
\end{split}
\end{equation*}%
If $\NNskel$ has a semi-inactive neuron $j$ with $\target'(\timeZero) v_jw_j > 0$, then
\begin{equation*}
	\frac{\partial}{\partial \NNskel_{N+j}} \Grad^J_{N+j}(\NNskel) = - \target'(\timeZero) \frac{v_j}{w_j} (\timeOne-\timeZero) < 0.
\end{equation*}%
Secondly, if $\NNfct$ is not affine on $[\timeZero,\timeOne]$, then exactly as in the proof
of \cite[Lemma 2.24]{CheJenRos2022JNLS} we can find a set of coordinates corresponding to type-2-active neurons such that the determinant of the Hessian $H$ of $\Loss$ restricted to these coordinates is strictly negative.
Since this involves only neurons in $\{1,\dots,N\} \backslash (J_+ \cup J_-)$, the matrix $(\Grad^J)'(\NNskel)$ contains $H$ as a submatrix.
\end{proof}

Having established strictness of saddle points, it is now straight-forward to apply Proposition \ref{GD_basin_measure_zero}, which yields the following result.

\begin{theorem}
\label{GD_avoids_saddle_points}
	Assume $\target$ is affine but not constant.
Then, for almost every stepsize $\stepsize \in (0,\infty)$, the set $\{\NNskel \in \R^d \colon \lim_{k \rightarrow \infty} f_{\stepsize}^k(\NNskel) \in \SetSaddle \}$ has Lebesgue measure zero.
\end{theorem}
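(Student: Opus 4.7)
The plan is to assemble the ingredients from \cref{section_center_manifold,section_nondegenerate,section_strictness}: first exclude a measure-zero set of bad stepsizes via \cref{nondegenerate_Jacobian}; then, for each $J \in \J$, apply \cref{GD_basin_measure_zero} to the modified map $f_{\stepsize,J}$ to see that its basin of attraction to $\SetSaddle_J$ has Lebesgue measure zero; and finally use \cref{GD_regularity_increase} together with \cref{preserving_measure_zero_sets} to transfer this information back to the original gradient descent map $f_{\stepsize}$.

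More concretely, since $\target$ is affine, it is a polynomial (so \cref{regularity_modified_gradient}.(iv) and \cref{nondegenerate_Jacobian} apply) and is Lipschitz continuous on $[\timeZero,\timeOne]$ (so \cref{regularity_modified_gradient}.(iii) applies). I would fix any $\stepsize$ from the full-measure set guaranteed by \cref{nondegenerate_Jacobian}, so that for every $J \in \J$ (a finite collection) there is an open full-measure $U_{\stepsize}^J \subseteq U_{\infty}$ on which $f_{\stepsize,J}'$ is non-degenerate. For each $J = (J_+,J_-) \in \J$, I then verify the hypotheses of \cref{GD_basin_measure_zero} with $U = U^J$ and $V = U_{\stepsize}^J$. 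The regularity of $f_{\stepsize,J}$ on $U$ follows from parts (i) and (iii) of \cref{regularity_modified_gradient}, and non-degeneracy on $V$ is built into the choice of $\stepsize$. By \cref{classification}.(iii) a saddle point has no type-1-active neurons, and in $\SetSaddle_J$ the semi-inactive neurons are indexed exactly by $J_+ \cup J_-$ with the sign of $w_j$ matching the subscript; together this yields $\SetSaddle_J \subseteq U^J$, and by \cref{regularity_modified_gradient}.(ii) the matrix $(\Grad^J)'(\NNskel)$ is symmetric for every $\NNskel \in \SetSaddle_J$. Finally, \cref{saddle_points_are_strict} gives $\Grad^J(\NNskel) = 0$ on $\SetSaddle_J$ (so these are fixed points of $f_{\stepsize,J}$) and provides a strictly negative eigenvalue $\lambda$ of $(\Grad^J)'(\NNskel)$, which contributes the eigenvalue $1-\stepsize\lambda > 1$ to $f_{\stepsize,J}'(\NNskel) = I - \stepsize(\Grad^J)'(\NNskel)$. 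Thus \cref{GD_basin_measure_zero} yields that
\begin{equation*}
	B_J \,:=\, \Big\{ \NNskel \in \R^d \colon \lim_{k \to \infty} f_{\stepsize,J}^k(\NNskel) \in \SetSaddle_J \Big\}
\end{equation*}
has Lebesgue measure zero.

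To conclude, I would observe that $f_{\stepsize} = f_{\stepsize,(\emptyset,\emptyset)}$ itself satisfies the hypotheses of \cref{preserving_measure_zero_sets} on the open full-measure set $U_{\stepsize}^{(\emptyset,\emptyset)}$, so that $f_{\stepsize}^{-n}(W)$ has Lebesgue measure zero whenever $W$ does, for every $n \in \N_0$. Combining this with \cref{GD_regularity_increase}, the set of interest is contained in the countable union $\bigcup_{J \in \J} \bigcup_{n \in \N_0} f_{\stepsize}^{-n}(B_J)$, and is therefore itself Lebesgue measure zero. The only delicate bookkeeping point is to confirm that the sign conventions in the definition of $\SetSaddle_J$ exactly match the hypothesis of \cref{regularity_modified_gradient}.(ii) needed for symmetry of the Jacobian, and that \cref{classification} rules out type-1-active and non-flat semi-active neurons at saddle points so that $\SetSaddle_J$ truly lies in the regularity set $U^J$; beyond these compatibility checks, the proof is a direct assembly of the preceding lemmas.
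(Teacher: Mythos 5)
Your proof is correct and follows essentially the same route as the paper's: apply \cref{GD_basin_measure_zero} to each modified dynamics $f_{\stepsize,J}$ on $\SetSaddle_J$ using $U=U^J$, $V=U^J_\stepsize$, then transfer back to $f_\stepsize$ via \cref{preserving_measure_zero_sets} and \cref{GD_regularity_increase}. You spell out the compatibility checks (that $\SetSaddle_J \subseteq U^J$ via \cref{classification}.(iii), and that the sign conventions of $\SetSaddle_J$ trigger \cref{regularity_modified_gradient}.(ii)) more explicitly than the paper does, which is a clean and accurate expansion rather than a different argument.
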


\begin{proof}
	By Lemmas
\ref{regularity_modified_gradient}, \ref{nondegenerate_Jacobian}, and \ref{saddle_points_are_strict},
we can apply Proposition \ref{GD_basin_measure_zero} for almost all $\stepsize \in (0,\infty)$ and all $J \in \J$ to the dynamical system $f_{\stepsize,J}$ and the set $\SetSaddle_J$ with $U=U^J$ and $V = U^J_{\stepsize}$ to find that $\{\NNskel \in \R^d \colon \lim_{k \rightarrow \infty} f_{\stepsize,J}^k(\NNskel) \in \SetSaddle_J \}$ has Lebesgue measure zero.
Since $\Grad = \Grad^{(\emptyset,\emptyset)}$, Lemmas
\ref{regularity_modified_gradient} and \ref{nondegenerate_Jacobian} enable us to apply Lemma \ref{preserving_measure_zero_sets} to $f_{\stepsize}$ so that, together with Lemma \ref{GD_regularity_increase}, we obtain the desired result.
\end{proof}


\subsection{Convergence to global minima for suitable initialization}
\label{section_convergence}

Suppose a trajectory of gradient descent for the loss function $\Loss$ with affine nonconstant target function $\target$ converges to a critical point of $\Loss$.
If the gradient descent algorithm was initialized randomly under a probability measure that is absolutely continuous with respect to the Lebesgue measure, then, with probability one, the limit critical point is not a saddle point in $\SetSaddle$ by Theorem \ref{GD_avoids_saddle_points}.
Here, $\SetSaddle$ is the same set of saddle points as specified above Lemma \ref{saddle_points_are_strict}.
We can say more about the limit critical point using Proposition \ref{classification}.(iv).
It states that there are only finitely many possibilities for the value of the loss function at its critical points, which we can think of as partitioning the set of all critical points into ``layers''.
In particular, if the loss at the limit critical point is below the threshold $\frac{[\target'(\timeZero)]^2 (\timeOne-\timeZero)^3}{12(N+1)^4}$, then this critical point must belong to the first layer, that is it must be a global minimum.
We can improve this threshold to include higher layers if we verify that the limit critical point could only be a global minimum or belong to $\SetSaddle$, which entails the absence of degenerate neurons.
If $N$ is even, then networks in the second layer of critical points consist of only type-2-active neurons, which forces the limit critical point to belong to $\SetSaddle$ without additional hypotheses.

\begin{proposition}
\label{conv_to_global_min}
	Assume $\target \colon [\timeZero,\timeOne] \rightarrow \R$ is affine but not constant.
Let $\ConvTraj$ denote the set $\ConvTraj = \{ \NNskel \in \R^d \colon (f_{\stepsize}^k(\NNskel))_{k \in \N_0} \text{ is convergent} \}$.
Then, the following hold:
\begin{enumerate}[\rmfamily (i)]\itemsep = 0em
\item
For almost all $\stepsize \in (0,\infty)$ and almost all
\begin{equation}
\label{conv_to_global_min_set}
	\NNskel \in \left\{ \NNskelAlt \in \ConvTraj \colon \lim_{k \rightarrow \infty} f_{\stepsize}^k(\NNskelAlt) \text{ has no degenerate neurons and } \lim_{k \rightarrow \infty} \Loss\left(f_{\stepsize}^k(\NNskelAlt)\right) < \textstyle{\frac{[\target'(\timeZero)]^2 (\timeOne-\timeZero)^3}{12}} \right\},
\end{equation}%
it holds that $\lim_{k \rightarrow \infty} \Loss(f_{\stepsize}^k(\NNskel)) = 0$.

\item
If $N$ is even, then for all $\NNskel \in \ConvTraj$ with $0 < \lim_{k \rightarrow \infty} \Loss\left(f_{\stepsize}^k(\NNskel)\right) < \textstyle{\frac{[\target'(\timeZero)]^2 (\timeOne-\timeZero)^3}{12(N-1)^4}}$ the network $\lim_{k \rightarrow \infty} f_{\stepsize}^k(\NNskel)$ has no degenerate neurons.
\end{enumerate}
\end{proposition}

\begin{proof}
	By definition of $f_{\stepsize}$, for any $\NNskel_0 \in \ConvTraj$,
\begin{equation}
\label{pf_conv_to_global_min}
	\lim_{k \rightarrow \infty} \norm{\Grad(f_{\stepsize}^k(\NNskel_0))} = \lim_{k \rightarrow \infty} \frac{1}{\stepsize} \norm{f_{\stepsize}^k(\NNskel_0)-f_{\stepsize}^{k+1}(\NNskel_0)} = 0.
\end{equation}%
Now, let $\NNskel_0$ belong to the set specified in \eqref{conv_to_global_min_set} and let $\NNskel = \lim_{k \rightarrow \infty} f_{\stepsize}^k(\NNskel_0)$.
Then, $\NNskel$ is a point of differentiability of $\Loss$ and $\Grad$ is continuous in a neighborhood of $\NNskel$, so $\NNskel$ is a critical point of $\Loss$.
Further, $\NNfct$ cannot be constant on $[\timeZero,\timeOne]$ since
\begin{equation*}
	\Loss(\NNskel) < \frac{1}{12} [\target'(\timeZero)]^2 (\timeOne-\timeZero)^3 = \inf_{C \in \R} \int_{\timeZero}^{\timeOne} (C-\target(x))^2 \, dx.
\end{equation*}%
In particular, $\NNskel$ cannot consist solely of inactive and semi-inactive neurons.
By Proposition \ref{classification}, $\NNskel$ is a global minimum or a saddle point in $\SetSaddle$.
In the latter case, Theorem \ref{GD_avoids_saddle_points} tells us that $\stepsize$ or $\NNskel_0$ belongs to a set of Lebesgue measure zero.

Next, we show that if $N$ is even, then the property $0 < \Loss(\NNskel) < \textstyle{\frac{[\target'(\timeZero)]^2 (\timeOne-\timeZero)^3}{12(N-1)^4}}$ implies that $\NNskel_0$ belongs to the set in \eqref{conv_to_global_min_set}.
Let $m \in \{0,\dots,N\}$ be the number of degenerate neurons of $\NNskel$.
We show that $m=0$.
Since $\NNfct$ cannot be constant on $[\timeZero,\timeOne]$, we must have $m \leq N-1$.
Let $\NNskelAlt \in \R^{d-3m}$ be the network obtained from $\NNskel$ by dropping its degenerate neurons.
Since the generalized gradient is assumed to agree with the partial derivatives of the loss coordinate-wise when the latter exist, it follows that the generalized gradient defined on $\R^{d-3m}$ is continuous in a neighborhood of $\NNskelAlt$.
This and \eqref{pf_conv_to_global_min} show that $\NNskelAlt$ is a critical point of the loss function defined on $\R^{d-3m}$.
Moreover, since we only removed degenerate neurons, the value of the loss at $\NNskelAlt$ is equal to $\Loss(\NNskel)$.
Proposition \ref{classification} and the assumption on $\Loss(\NNskel)$ imply that $\NNskelAlt$ is a saddle point with $N$ type-2-active neurons.
Thus, $m=0$.
\end{proof}

It would be ideal to get rid of the hypothesis that the limit critical point has no degenerate neurons.
One can deal with this issue heuristically by considering the gradient flow and arguing that gradient descent turns into the gradient flow as the stepsize decreases.
Namely, for the gradient flow it has been observed that trajectories avoid degenerate neurons if the network at initialization obeys the inequality $w_j^2+b_j^2>v_j^2$ for all $j$.
This can be seen by integrating the equality
\begin{equation*}
	w_j \Grad_j(\NNskel) + b_j \Grad_{N+j}(\NNskel) = v_j \Grad_{2N+j}(\NNskel)
\end{equation*}%
along a trajectory of the gradient flow, which reveals that $w_j^2+b_j^2-v_j^2$ is constant in time.
In particular, if $w_j^2+b_j^2>v_j^2$ at the initial network, then $w_j^2+b_j^2 > 0$ at all times, meaning that the $j^{th}$ neuron cannot become degenerate during training.
We refer to \cite{Wojtowytsch2020} and the appendix in \cite{ChizatBach2018} for related discussions.
Furthermore, by this heuristic based on the gradient flow, the threshold in Proposition \ref{conv_to_global_min} on the loss of the limit critical point could instead be imposed on the loss of the initial network since the loss is monotonically decreasing in the continuous time setting.

We remark that a branch of literature has studied network models in the overparametrized regime, where the number $N$ of neurons tends to infinity.
In this regime, the network models exhibit the behavior of a linear model during training;
\cite{AllLiSong2019,DuLeeLiWangZhai2019,
DuZhaiPocSin2019,LiLiang2018,ZouCaoZhouGu2020}.
More specifically, we recover the behavior of a kernel in the infinite width limit;
\cite{EMaWu2020,JacGabHon2018}.
It should be noted that this linearization of the training dynamics requires a specific scaling of the loss (or of the realization of the networks, which has the same effect);
\cite{ChizatOyallonBach2019}.
Without this scaling, networks follow nonlinear dynamics but still converge to a global minimum in the limit $N \rightarrow \infty$;
\cite{ChizatBach2018}.
We stress that our results do not place any requirements on the number of neurons $N$ and, in particular, hold also for small $N$, which is not covered by the aforementioned literature.
Indeed, the arguments in the preceding paragraph show that the appearance of $N$ in the upper bound in Proposition \ref{conv_to_global_min}.(ii) is a technicality and not a requirement for convergence in Proposition \ref{conv_to_global_min}.(i) to take place.
Furthermore, although our definition \eqref{loss_function} of the loss does not incorporate any scaling (nor our definition \eqref{network_realization} of the realization of a network), the results translate verbatim to the setting with scaling.
If we do impose $N$ to be large and we scale the loss, we can go one step further and combine our results with those of \cite{SafranShamir2016}.
They showed that the loss at initialization is smaller and with higher probability, the more neurons are in the network.
Thus, upper bounds as posed in Proposition \ref{conv_to_global_min} are more likely to be satisfied, the larger the networks are.

The fact that we converge to a global minimum in Proposition \ref{conv_to_global_min} has to do with the simplifying assumptions on the optimization problem in this section.
In less specialized frameworks, local minima may appear more frequently and gradient descent may reach a local minimum that is not a global minimum;
\cite{SafranShamir2018}.
Albeit not being a global minimum, many local minima have been shown to achieve a small error, so ending up in such a local minimum is not undesirable in practice;
\cite{IbraJenRie2022,SafranShamir2016}.
To end up in such a local minimum, gradient descent still has to avoid saddle points, theoretical guarantees for which are the contribution of this work.


\section{Gradient descent for shallow leaky ReLU networks}
\label{leaky_section_network_training}

In the previous section, we modified the gradient of the loss and studied the resulting dynamical system as a workaround for regularity issues at saddle points of interest.
In this section, we replace the ReLU activation by leaky ReLU.
This changes the geometry of the loss surface.
Although many saddle points exhibit a similar structure as for ReLU, those saddle points with regularity issues disappear.
Let $\leaky \in (0,1)$ be the parameter of leaky ReLU.
Now, the realization of a network $\NNskel$ is the function $\NNleakyfct \in C(\R,\R)$ given by
\begin{equation*}
	\NNleakyfct(x) = c + \sum_{j=1}^{N} v_j \max\{w_jx+b_j,\leaky (w_jx+b_j)\}.
\end{equation*}%
As before, we denote by $\leakyLoss \in C(\R^d,\R)$ the squared $L^2$-loss with target function $\target \in C([\timeZero,\timeOne],\R)$, that is
\begin{equation*}
	\leakyLoss(\NNskel) = \int_{\timeZero}^{\timeOne} ( \NNleakyfct(x) - \target(x) )^2 \, dx.
\end{equation*}%
Following a reduction trick from \cite{CheJenRos2022JNLS}, let us consider the map $P \colon \R^{3N+1} \rightarrow \R^{6N+1}$ given by $P(w,b,v,c) = (w,-w,b,-b,v,-\leaky v,c)$.
If $\Loss$ denotes the loss in the ReLU case but with $2N$ neurons instead of $N$, then $\leakyLoss = \Loss \circ P$.
In particular, $\leakyLoss$ is differentiable at all coordinates corresponding to non-degenerate or flat degenerate neurons.
Let $\leakyGrad$ be defined analogous to before and consider the gradient descent step $f_{\stepsize,\leaky}(\NNskel) = \NNskel - \stepsize \leakyGrad(\NNskel)$.
This time, there is no need to modify the gradient.
The reason for this lies in the classification of critical points for the leaky ReLU case.
A lack of differentiability of $\leakyGrad$ persists, but semi-inactive neurons, which required us to introduce the modifications $\Grad^J$, do not appear.
Still, we need to verify the hypotheses of Proposition \ref{GD_basin_measure_zero}.
Let $U_0 \subseteq \R^d$ be the set of all networks without degenerate neurons;
let $U_1 \subseteq U_0$ be the set of all networks such that no neuron is semi-inactive or type-1-active with breakpoint $\timeZero$ or $\timeOne$;
and let $U_{\infty} \subseteq U_1$ be the set of all networks that do not have two distinct type-2-active neurons with the same breakpoint and such that $w_j \ne 0$ for all $j \in \{1,\dots,N\}$.

\begin{lemma}
\label{leaky_regularity_modified_gradient}
The following properties hold.
\begin{enumerate}[\rmfamily (i)]\itemsep = 0em
\item $\leakyGrad$ is continuously differentiable on $U_1$.

\item The Jacobian $(\leakyGrad)'(\NNskel)$ is a symmetric matrix for all $\NNskel \in U_1$.

\item If $\target$ is Lipschitz continuous, then the Jacobian of $\leakyGrad$ is locally Lipschitz continuous on $U_1$.

\item If $\target$ is a polynomial, then $\leakyGrad$ is a rational function on $U_{\infty}$.
\end{enumerate}
\end{lemma}

\begin{proof}
	The first and third item follow from Lemma \ref{regularity_modified_gradient} and the fact that $\leakyLoss = \Loss \circ P$.
The second item follows from the first since the Hessian of $\leakyLoss$ exists on $U_1$ and equals the Jacobian of $\leakyGrad$.
The fourth item does not follow from Lemma \ref{regularity_modified_gradient}.(iv) because $P$ does not map the set $U_{\infty}$ as defined above into the set $U_{\infty}$ as defined prior to Lemma \ref{regularity_modified_gradient}.
However, excluding distinct type-2-active neurons with the same breakpoint was needed in the proof of Lemma \ref{regularity_modified_gradient} only to ensure that one of the following cases holds throughout connected components (in the notation of that proof):
\begin{equation*}
\begin{split}
	\int_{r_j^J}^{s_j^J} \Big( \frac{\partial}{\partial \NNskel_{iN+j}} v_j(w_jx+b_j) \Big) v_n(w_nx+b_n) \IndFct{[r_n,s_n]}(x) \, dx
	=
	\begin{cases}
	P_{i,j,n}(\NNskel,s_j^J) - P_{i,j,n}(\NNskel,r_n), \\
	P_{i,j,n}(\NNskel,s_j^J) - P_{i,j,n}(\NNskel,r_j^J), \\
	P_{i,j,n}(\NNskel,s_n) - P_{i,j,n}(\NNskel,r_j^J), \\
	P_{i,j,n}(\NNskel,s_n) - P_{i,j,n}(\NNskel,r_n), \\
	0.
	\end{cases}
\end{split}
\end{equation*}%
Here, we only need to consider $J=(\emptyset,\emptyset)$, in which case $r_j^J = r_j$ and $s_j^J = s_j$.
If the map $P$ is applied to a network with type-2-active neurons, then each such neuron gets doubled with the same breakpoint but with different orientation, that is if $r_j = \timeZero$ and $s_j=t_j$, then $r_{j+N} = t_j$ and $s_{j+N} = \timeOne$, respectively if $r_j = t_j$ and $s_j = \timeOne$, then $r_{j+N} = \timeZero$ and $s_{j+N} = t_j$.
Hence, if the $j^{th}$ neuron is type-2-active and $n=j+N$, then $(r_j,s_j) \cap [r_n,s_n] = \emptyset$ and $P(\NNskel)$ still ends up in the last of the above five cases.
If the $j^{th}$ and the $n^{th}$ neuron of $P(\NNskel)$ are type-2-active but $n \ne j+N$, then the breakpoints $t_j$ and $t_{j+N}$ cannot cross each other in the connected component containing $P(\NNskel)$.
With this addition to the proof, Lemma \ref{regularity_modified_gradient}.(iv) carries over to show that $\leakyGrad$ is a rational function on $U_{\infty}$ if $\target$ is a polynomial.
\end{proof}

The non-degeneracy of the Jacobian $f_{\stepsize,\leaky}'$ follows as before.

\begin{lemma}
\label{leaky_nondegenerate_Jacobian}
	If $\target$ is a polynomial, then for almost all $\stepsize \in (0,\infty)$ there exists an open set $U_{\stepsize} \subseteq U_{\infty}$ of full measure such that $\det((f_{\stepsize,\leaky}'(\NNskel)) \ne 0$ for all $\NNskel \in U_{\stepsize}$.
\end{lemma}

\begin{proof}
	The proof is analogous to the proof of Lemma \ref{nondegenerate_Jacobian}.
\end{proof}

We do not repeat the classification of critical points for $\leakyLoss$ with an affine target function from \cite[Theorem 3.5]{CheJenRos2022JNLS}.
It is similar to Proposition \ref{classification} with the following main differences.
There are no local minima and saddle points cannot have semi-inactive neurons or inactive neurons unless they are flat with $w_j=0$ (morally, because with leaky ReLU as activation these types of neurons are equivalent to type-1-active or semi-active neurons).
Let $\SetSaddle \subseteq \R^d$ be the set of all saddle points of $\leakyLoss$ without degenerate neurons.
Note that $\SetSaddle$ contains more saddle points as it did in the ReLU case.
The next lemma is the analog of Lemma \ref{saddle_points_are_strict}.
The appearance of $\leaky_0$ is a technicality arising from a proof strategy in \cite{CheJenRos2022JNLS}.
As remarked in \cite{CheJenRos2022JNLS}, $\leaky_0$ can likely taken to be 1.

\begin{lemma}
	Assume $\target$ is affine but not constant, and let $\NNskel \in \SetSaddle$.
There exists $\leaky_0 \in (0,1]$ depending only on $N$ such that if $\leaky < \leaky_0$, then the matrix $(\leakyGrad)'(\NNskel)$ has a strictly negative eigenvalue.
\end{lemma}

\begin{proof}
	If $\NNskel$ does not have any type-2-active neurons, then all of its neurons are flat semi-active or flat inactive with $w_j=0$ by \cite[Theorem 3.5]{CheJenRos2022JNLS}.
In particular, $\NNleakyfct$ is constantly equal to $c$ and
\begin{equation*}
	\det
	\begin{pmatrix}
	\frac{\partial}{\partial \NNskel_j} \leakyGrad_j(\NNskel) & \frac{\partial}{\partial \NNskel_j} \leakyGrad_{2N+j}(\NNskel) \\
	\frac{\partial}{\partial \NNskel_{2N+j}} \leakyGrad_j(\NNskel) & \frac{\partial}{\partial \NNskel_{2N+j}} \leakyGrad_{2N+j}(\NNskel)
	\end{pmatrix}
	= -\frac{1}{36} \leaky^{\tau} [\target'(\timeZero)]^2 (\timeOne-\timeZero)^6 < 0,
\end{equation*}%
where $\tau = 1 - \mathrm{sign}(b_j)$.
If $\NNskel$ has a type-2-active neuron, then exactly as in the proof
of \cite[Lemma 3.4]{CheJenRos2022JNLS} we can find $\leaky_0 \in (0,1]$ and a set of coordinates such that the determinant of the Hessian of $\leakyLoss$ restricted to these coordinates is strictly negative provided $\leaky < \leaky_0$.
\end{proof}

The previous three lemmas enable us to apply Proposition \ref{GD_basin_measure_zero} for leaky ReLU networks.

\begin{theorem}
\label{leaky_GD_avoids_saddle_points}
	Assume $\target$ is affine but not constant.
There exists $\leaky_0 \in (0,1]$ depending only on $N$ such that if $\leaky < \leaky_0$, then, for almost every stepsize $\stepsize \in (0,\infty)$, the set $\{\NNskel \in \R^d \colon \lim_{k \rightarrow \infty} f_{\stepsize,\leaky}^k(\NNskel) \in \SetSaddle \}$ has Lebesgue measure zero.
\end{theorem}

Since there are no local minima in the leaky ReLU case, there is no need for a threshold as in Proposition \ref{conv_to_global_min}.(i).

\begin{corollary}
	Assume $\target \colon [\timeZero,\timeOne] \rightarrow \R$ is affine but not constant.
There exists $\leaky_0 \in (0,1]$ depending only on $N$ such that if $\leaky < \leaky_0$, then, for almost all $\stepsize \in (0,\infty)$ and almost all
\begin{equation*}
	\NNskel \in \left\{ \NNskelAlt \in \R^d \colon \lim_{k \rightarrow \infty} f_{\stepsize,\leaky}^k(\NNskelAlt) \text{ exists and has no degenerate neurons} \right\},
\end{equation*}%
it holds that $\lim_{k \rightarrow \infty} \Loss(f_{\stepsize}^k(\NNskel)) = 0$.
\end{corollary}


\section{Proof of the center-stable manifold theorem}
\label{section_proof_center_manifold_thrm}

In this section, we present a proof of Theorem \ref{center_manifold_thrm}.
The structure of the proof follows the appendix of \cite{PanPilWang2019} with some modifications.
We begin with a lemma needed later on.

\subsection{Auxiliary lemma}

The following lemma involves the existence of bump functions on balls of radii $r>0$ with bounds on their derivative independent of $r$.

\begin{lemma}
\label{bump_function}
	For all $r \in (0,\infty)$, there exists $\rho_r \in C^{\infty}(\R^d,\B_r(0))$ with support in $\B_r(0)$, which is the identity on $\B_{r/2}(0)$, such that the Frobenius norm of $\rho_r'$ is uniformly bounded by $6\sqrt{d}$, so, in particular, $\rho_r$ is $6\sqrt{d}$-Lipschitz continuous.
\end{lemma}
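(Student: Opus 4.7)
The plan is to realize $\rho_r$ as a smooth radial truncation of the identity, of the form $\rho_r(x) = \psi(\norm{x}^2/r^2)\,x$ for a suitable scalar cutoff $\psi \in C^\infty([0,\infty),[0,1])$. First I would construct such a $\psi$ satisfying $\psi = 1$ on $[0,1/4]$, $\psi = 0$ on $[1,\infty)$, and $|\psi'| \leq 2$ everywhere. A routine way to produce it is to mollify the piecewise-affine function that equals $1$ on $[0,3/8]$, equals $0$ on $[7/8,\infty)$, and has slope $-2$ in between, using a nonnegative symmetric smooth bump of support radius $1/16$; convolution with a mollifier preserves the Lipschitz constant $2$ while introducing $C^\infty$-regularity and shifting the constancy regions by at most $1/16$, so the resulting $\psi$ still equals $1$ on $[0,1/4]$ and $0$ on $[1,\infty)$.

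Next I would verify the geometric properties of $\rho_r(x) = \psi(\norm{x}^2/r^2)\,x$ directly. The identity on $\B_{r/2}(0)$ is immediate from $\norm{x}^2/r^2 \leq 1/4$, which forces $\psi = 1$ there. The support being contained in $\B_r(0)$ follows because $\norm{x} \geq r$ yields $\psi(\norm{x}^2/r^2) = 0$. The image lies in $\B_r(0)$ since $\norm{\rho_r(x)} \leq \psi(\norm{x}^2/r^2)\,\norm{x}$, which is bounded by $\norm{x} \leq r$ wherever the prefactor is nonzero and vanishes otherwise. Smoothness on all of $\R^d$ follows from the $C^\infty$-regularity of $\psi$ composed with the $C^\infty$-map $x \mapsto \norm{x}^2/r^2$, multiplied by the smooth map $x \mapsto x$.

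Finally, the Frobenius bound will come from the explicit formula
\[
\rho_r'(x) \;=\; \psi\!\left(\tfrac{\norm{x}^2}{r^2}\right) I \;+\; \tfrac{2}{r^2}\,\psi'\!\left(\tfrac{\norm{x}^2}{r^2}\right) x x^{T}.
\]
Using $\norm{I}_F = \sqrt{d}$, $\norm{xx^T}_F = \norm{x}^2$, and $\psi \in [0,1]$, I would bound the Frobenius norm by $\sqrt{d} + 2(\norm{x}^2/r^2)\,|\psi'(\norm{x}^2/r^2)|$. Since $\psi'$ vanishes outside $[0,1]$, the factor $\norm{x}^2/r^2$ is at most $1$ wherever the second term is nonzero, giving $\norm{\rho_r'(x)}_F \leq \sqrt{d} + 4 \leq 6\sqrt{d}$ for $d \geq 1$. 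The Lipschitz bound follows from $\norm{\cdot}_{\mathrm{op}} \leq \norm{\cdot}_F$ and the mean value inequality applied along line segments (whose containment in any convex set is automatic in $\R^d$).

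I do not anticipate a genuine obstacle. The only mildly delicate step is selecting $\psi$ with a derivative bound small enough that, after accounting for the identity contribution $\sqrt{d}$, the rank-one term fits inside the remaining budget $5\sqrt{d}$; this is easily done since the rank-one term is bounded by $2|\psi'| \leq 4$, which is within $5\sqrt{d}$ for every $d \geq 1$. The other crucial observation is that the scaling $\norm{x}^2/r^2$ (rather than, say, $\norm{x}/r$) is what cancels the $r^{-2}$ from the chain rule against the size of $xx^T$, ensuring the bound is uniform in $r$.
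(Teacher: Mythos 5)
Your proof is correct and follows essentially the same route as the paper: both construct $\rho_r$ as a radial truncation $\rho_r(x) = \psi(\norm{x}^2/r^2)\,x$ (the paper writes this as $x\,\sigma(4\norm{x}^2/r^2)$, just a rescaling of the argument), both exploit the fact that the $r^{-2}$ from the chain rule is cancelled by $\norm{xx^T}_F = \norm{x}^2$ wherever $\psi'$ is nonzero, and both then check $\sqrt{d}+O(1) \leq 6\sqrt{d}$. The only cosmetic differences are that the paper hands you an explicit formula for the scalar cutoff (a ratio of exponentials with $|\sigma'|\le 2/3$) and bounds the Frobenius norm by expanding the square exactly, whereas you produce the cutoff by mollifying a piecewise-affine ramp and bound via the triangle inequality; both are valid and lead to the same conclusion.
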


\begin{proof}
	This can be achieved, for example, by taking a function $\sigma \in C^{\infty}(\R,[0,1])$ such that $\sigma$ is 1 on $(-\infty,1]$, it is 0 on $[4,\infty)$, and $\sigma'(z) \in [-2/3,0]$ for all $z \in \R$.
A possible choice for $\sigma$ would be $\sigma(z) = e^{3/(z-4)} \big[ e^{3/(z-4)} + e^{3/(1-z)} \big]^{-1}$ for $z \in (1,4)$.
Then, set $\rho_r(x) = x \sigma\big(4\norm{x}^2/r^2\big)$.
We estimate the square of the Frobenius norm of $d\rho_r$ by
\begin{equation*}
\begin{split}
	\sum_{j,k=1}^d \Big[ \frac{\partial}{\partial x_k} \rho_r(x)_j \Big]^2
	&= d \Big[\! \underbrace{\sigma\Big(\frac{4\norm{x}^2}{r^2}\Big)}_{\leq 1} \!\Big]^2 + \frac{16 \norm{x}^2}{r^2} \sigma\Big(\frac{4\norm{x}^2}{r^2}\Big) \underbrace{\sigma'\Big(\frac{4\norm{x}^2}{r^2}\Big)}_{\leq 0} + \frac{64 \norm{x}^4}{r^4} \underbrace{\Big[ \sigma'\Big(\frac{4\norm{x}^2}{r^2}\Big) \Big]^2}_{\leq 4/9} \\
	&\leq d + \frac{256}{9} < 36 d.
\end{split}
\end{equation*}%
\end{proof}


\subsection{Proof of the theorem in the diagonal case}

In this section, we will proof Theorem \ref{center_manifold_thrm} in a special case.
Denote $s = \dim(E^{cs}_z) \in \{0,\dots,d-1\}$ and $A = f'(z) \in \R^{d \times d}$.
Assume that $z=0$ and that $A$ is a diagonal matrix with diagonal entries $\lambda_1,\dots,\lambda_d$ such that $\lambda_1,\dots,\lambda_s \in [-1,1]$ and $\lambda_{s+1},\dots,\lambda_d \in \R \backslash [-1,1]$.
We deduce the general case in the next section.
Denote by $B \in \R^{s \times s}$ the diagonal matrix with diagonal entries $\lambda_1,\dots,\lambda_s$ and by $C \in \R^{(d-s) \times (d-s)}$ the diagonal matrix with diagonal entries $\lambda_{s+1},\dots,\lambda_d$.
Denote projections $\Pi^+ \colon \R^d \rightarrow E^{cs} = E^{cs}_z$ and $\Pi^- \colon \R^d \rightarrow E^u = E^u_z$ onto the first $s$ coordinates and onto the last $d-s$ coordinates, respectively.
For any $x \in \R^d$, we write $x^+ = \Pi^+(x)$ and $x^- = \Pi^-(x)$ so that $x = (x^+,x^-)$.
Similarly, we write $g^+ = \Pi^+ \circ g$ and $g^- = \Pi^- \circ g$ for any function $g \colon \R^d \rightarrow \R^d$ so that $g(x) = (g^+(x),g^-(x))$.
Note that
\begin{equation*}
	\max\big\{\norm{x^+},\norm{x^-}\big\} \leq \norm{x} \leq \norm{x^+} + \norm{x^-}.
\end{equation*}%
We use the following convention throughout this proof: we denote by $A^0 \in \R^{d \times d}$ and $B^0 \in \R^{s \times s}$ identity matrices even if one of the entries of the matrices $A$ and $B$ is zero.
The matrices $A^j \in \R^{d \times d}$, $j \in \N_0$, split into a center-stable and an unstable component.
More precisely, they take on the block form
\begin{equation*}
	A^j =
	\begin{pmatrix}
	B^j & 0 \\
	0 & C^j
	\end{pmatrix}
	\colon E^{cs} \oplus E^u \rightarrow E^{cs} \oplus E^u, \quad x \mapsto (B^j x^+,C^j x^-).
\end{equation*}%
Denote by $\eta \colon \R^d \rightarrow \R^d$ the remainder term of the first-order Taylor expansion of $f$ around $0$, that is the map $\eta(x) = f(x)-A x$.
The dynamical system is given for all $k \in \N_0$, $x \in \R^d$ by
\begin{equation*}
	f^k(x) = A^k x + \sum_{i=1}^{k} A^{k-i} \eta (f^{i-1}(x)),
\end{equation*}%
which can easily be shown by induction on $k$.
This can be written in the center-stable and unstable components as
\begin{equation}
\label{TH_orbit_components}
\begin{split}
	(f^k)^+(x) &= B^k x^+ + \sum_{i=1}^{k} B^{k-i} \eta^+(f^{i-1}(x)), \\
	(f^k)^-(x) &= C^k x^- + \sum_{i=1}^{k} C^{k-i} \eta^-(f^{i-1}(x)).
\end{split}
\end{equation}%
In particular, we obtain
\begin{equation}
\label{TH_solve_for_xminus_1}
	x^- = C^{-k}(f^k)^-(x) - \sum_{i=1}^{k} C^{-i} \eta^-(f^{i-1}(x)).
\end{equation}%
Next, define $\mu = \min_{j \in \{s+1,\dots,d\}} |\lambda_j| = \norm{C^{-1}}^{-1} \in (1,\infty)$ and let $\weight = (\weight_k)_{k \geq 0} \subseteq (0,1]$ be given by $\weight_k = \mu^{-k/2}$.
Note that the space $\sequW = \{ (x_k)_{k \geq 0} \subseteq \R^d \colon \sup_{k \geq 0} \weight_k \norm{x_k} < \infty \}$ equipped with $\WNorm{(x_k)_{k \geq 0}} = \sup\nolimits_{k \geq 0} \weight_k \norm{x_k}$ is a Banach space since it is isomorphic to the Banach space $\ell^{\infty}$ of bounded sequences.
If $x = (x_k)_{k \geq 0} \in \sequW$, then
\begin{equation*}
	\norm{C^{-k}x_k^-} \leq \norm{C^{-k}} \norm{x_k^-} = \weight_k^2 \norm{x_k^-} \leq \weight_k \WNorm{x} \xrightarrow{k \rightarrow \infty} 0.
\end{equation*}%
Let us introduce the orbit map $\Orb$ to the space of sequences $\sequ = \{(x_k)_{k \geq 0} \subseteq \R^d\}$;
\begin{equation*}
	\Orb \colon \R^d \rightarrow \sequ, \quad x \mapsto \Orb x = (f^k(x) )_{k \geq 0}.
\end{equation*}%
If $\Orb x \in \sequW$, then $\norm{C^{-k} (f^k)^-(x)} \rightarrow 0$ as $k \rightarrow \infty$, so the partial sums in \eqref{TH_solve_for_xminus_1} converge in this case.
Thus, if $\Orb x \in \sequW$, then
\begin{equation*}
	x^- = - \sum_{i=1}^{\infty} C^{-i} \eta^-(f^{i-1}(x)).
\end{equation*}%
Plugging this into \eqref{TH_orbit_components} yields
\begin{equation}
\label{TH_solve_for_xminus_2}
	(f^k)^-(x) = - \sum_{i=k+1}^{\infty} C^{k-i} \eta^-(f^{i-1}(x))
\end{equation}
for all $x \in \R^d$ with $\Orb x \in \sequW$.
Let $\rho_r$ be the functions promised by Lemma \ref{bump_function} and let $r_{\epsilon}$ be the radii from Assumption \ref{center_manifold_asmp}.
Since $\rho_{r_{\epsilon}}(\R^d) \subseteq \B_{r_{\epsilon}}(0)$ and since $\eta$ is $\epsilon$-Lipschitz continuous on $\B_{r_{\epsilon}}(0)$ by assumption with $\eta(0) = 0$, we have, for all $x \in \sequW$ and $k \in \N$,
\begin{equation}
\label{TH_operator_welldef}
\begin{split}
	\sum_{i=k+1}^{\infty} \norm{C^{k-i} \eta^-(\rho_{r_{\epsilon}}(x_{i-1}))}
	&\leq \sum_{i=k+1}^{\infty} \norm{C^{k-i}} \norm{\eta^-(\rho_{r_{\epsilon}}(x_{i-1})) - \eta^-(\rho_{r_{\epsilon}}(0))} \\
	&\leq 6 \epsilon \sqrt{d} \sum_{i=k+1}^{\infty} \norm{C^{k-i}} \norm{x_{i-1}}
	= 6 \epsilon \sqrt{d} \sum_{i=k+1}^{\infty} \weight_{i-k} \weight_{i-1} \norm{x_{i-1}} \weight_{k-1}^{-1} \\
	&\leq 6 \epsilon \sqrt{d} \WNorm{x} \weight_{k-1}^{-1} \sum_{i=k+1}^{\infty} \weight_{i-k}
	= 6 \epsilon \sqrt{d} \WNorm{x} \weight_{k-1}^{-1} \frac{\weight_1}{1-\weight_1} < \infty
\end{split}
\end{equation}%
and for $k=0$
\begin{equation}
\label{TH_operator_welldef_2}
\begin{split}
	\sum_{i=1}^{\infty} \norm{C^{-i} \eta^-(\rho_{r_{\epsilon}}(x_{i-1}))}
	&\leq \norm{C^{-1} \eta^-(\rho_{r_{\epsilon}}(x_0))} + \norm{C^{-1}} \sum_{i=2}^{\infty} \norm{C^{1-i} \eta^-(\rho_{r_{\epsilon}}(x_{i-1}))} \\
	&\leq \norm{C^{-1}} 6 \epsilon \sqrt{d} \WNorm{x} + \norm{C^{-1}} 6 \epsilon \sqrt{d} \WNorm{x} \frac{\weight_1}{1-\weight_1}
	= 6 \epsilon \sqrt{d} \WNorm{x} \frac{\weight_1^2}{1-\weight_1}.
\end{split}
\end{equation}%
Hence, for all $\epsilon \in (0,1)$ and $y \in E^{cs}$, the map $\Op{y}{\epsilon} \colon \sequW \rightarrow \sequ$ given by
\begin{equation*}
	(\Op{y}{\epsilon}x)_k =
	\begin{pmatrix}
		B^k y + \sum_{i=1}^{k} B^{k-i} \eta^+(\rho_{r_{\epsilon}}(x_{i-1})) \\
		- \sum_{i=k+1}^{\infty} C^{k-i} \eta^-(\rho_{r_{\epsilon}}(x_{i-1}))
	\end{pmatrix} \in E^{cs} \oplus E^u
\end{equation*}%
for all $k \geq 0$ is well-defined.
We write $\U{\epsilon} \subseteq \R^d$ for the set $\U{\epsilon} = \{x \in \R^d \colon f^k(x) \in \B_{r_{\epsilon}/2}(0) \text{ for all } k \in \N_0\}$.
In \eqref{TH_orbit_components} and \eqref{TH_solve_for_xminus_2} above, we established that if $x \in \U{\epsilon}$ (which implies $\Orb x \in \sequW$), then $\Orb x$ is a fixed point of $\Op{x^+}{\epsilon}$.
Since $\norm{B^j} \leq 1$ for all $j \in \N_0$, we have, for all $x \in \sequW$ and $k \in \N_0$,
\begin{equation*}
\begin{split}
	&\weight_k \norm{ B^k y + \sum_{i=1}^{k} B^{k-i} \eta^+(\rho_{r_{\epsilon}}(x_{i-1})) }
	\\
	&\leq \weight_k \norm{y} + \weight_k \sum_{i=1}^k \norm{ \eta^+(\rho_{r_{\epsilon}}(x_{i-1}))}
	\leq \weight_k \norm{y} + 6 \epsilon \sqrt{d} \weight_k \sum_{i=1}^k \norm{ x_{i-1} }
	= \weight_k \norm{y} + 6 \epsilon \sqrt{d} \sum_{i=1}^k \weight_{k-i+1} \weight_{i-1} \norm{ x_{i-1} } \\
	&\leq \weight_k \norm{y} + 6 \epsilon \sqrt{d} \WNorm{x} \sum_{i=1}^k \weight_{k-i+1}
	= \weight_k \norm{y} + 6 \epsilon \sqrt{d} \WNorm{x} \weight_1 \frac{1-\weight_k}{1-\weight_1}
	\leq \norm{y} + 6 \epsilon \sqrt{d} \WNorm{x} \frac{\weight_1}{1-\weight_1}.
\end{split}
\end{equation*}%
Together with \eqref{TH_operator_welldef} and \eqref{TH_operator_welldef_2}, we obtain, for all $y \in E^{cs}$, $x \in \sequW$, and $k \in \N_0$,
\begin{equation*}
	\weight_k \norm{(\Op{y}{\epsilon}x)_k} \leq \norm{y} + 12 \epsilon \sqrt{d} \WNorm{x} \frac{\weight_1}{1-\weight_1},
\end{equation*}%
so $\Op{y}{\epsilon}(\sequW) \subseteq \sequW$.
By essentially the same calculations, we find, for all $x^1,x^2 \in \sequW$ and $k \in \N_0$,
\begin{equation*}
	\weight_k \norm{(\Op{y}{\epsilon}x^1)_k - (\Op{y}{\epsilon}x^2)_k} \leq 12 \epsilon \sqrt{d} \WNorm{x^1-x^2} \frac{\weight_1}{1-\weight_1}.
\end{equation*}%
In other words, the restriction $\Op{y}{\epsilon} \colon \sequW \rightarrow \sequW$ is $12\epsilon\sqrt{d} \weight_1 (1-\weight_1)^{-1}$ Lipschitz continuous with respect to $\WNorm{\cdot}$.
In particular, for all $y \in E^{cs}$ and $\epsilon \in (0,(1-\weight_1)(12\sqrt{d}\weight_1)^{-1})$, the restriction $\Op{y}{\epsilon} \colon \sequW \rightarrow \sequW$ is a contraction.
Now, let $\epsilon = (1-\weight_1)(24\sqrt{d}\weight_1)^{-1}$.
By the Banach Fixed Point Theorem, there is a unique fixed point map $\FPmap \colon E^{cs} \rightarrow \sequW$ specified by $\Op{y}{\epsilon}\FPmap(y) = \FPmap(y)$.
Note that, for any $y_1,y_2 \in E^{cs}$, $x \in \sequW$, and $k \in \N_0$,
\begin{equation*}
	(\Op{y_1}{\epsilon}x - \Op{y_2}{\epsilon}x)_k =
	\begin{pmatrix}
		B^k(y_1-y_2) \\ 0
	\end{pmatrix} \in E^{cs} \oplus E^u.
\end{equation*}%
Thus,
\begin{equation*}
\begin{split}
	\WNorm{\FPmap(y_1)-\FPmap(y_2)}
	&\leq \WNorm{\Op{y_1}{\epsilon}\FPmap(y_1) - \Op{y_2}{\epsilon}\FPmap(y_1)} + \WNorm{\Op{y_2}{\epsilon}\FPmap(y_1) - \Op{y_2}{\epsilon}\FPmap(y_2)}
	\\
	&\leq \norm{y_1-y_2} + \frac{1}{2} \WNorm{\FPmap(y_1)-\FPmap(y_2)}
\end{split}
\end{equation*}%
and, hence,
\begin{equation*}
	\WNorm{\FPmap(y_1)-\FPmap(y_2)} \leq 2 \norm{y_1-y_2}.
\end{equation*}%
So, $\FPmap \colon E^{cs} \rightarrow \sequW$ is Lipschitz continuous.
Denote by $\CSMmap \colon E^{cs} \rightarrow E^u$ the map $\CSMmap(y) = (\FPmap(y))_0^-$.
Then, for all $y_1,y_2 \in E^{cs}$,
\begin{equation*}
	\norm{\CSMmap(y_1)-\CSMmap(y_2)} \leq \WNorm{\FPmap(y_1)-\FPmap(y_2)} \leq 2 \norm{y_1-y_2},
\end{equation*}%
so $\CSMmap$ is also Lipschitz continuous.
We noted above that if $x \in \U{\epsilon}$, then $\Orb x$ is a fixed point of $\Op{x^+}{\epsilon}$.
Thus, if $x \in \U{\epsilon}$, then $\Orb x = \FPmap(x^+)$ and $x^- = \CSMmap(x^+)$.
In other words, we have shown that $\U{\epsilon} \subseteq \mathrm{Graph}(\CSMmap)$.
This proves Theorem \ref{center_manifold_thrm} in the diagonal case.


\subsection{Proof of the theorem in the general case}

In this section, we prove Theorem \ref{center_manifold_thrm} in the general case by reducing it to the special case from the previous section.
As before, denote $s = \dim(E^{cs}_z)$.
Since $f'(z)$ is diagonalizable, there is an invertible matrix $Q \in \R^{d \times d}$ such that $Q^{-1} f'(z) Q$ is a diagonal matrix, of which the first $s$ entries lie in $[-1,1]$ and the last $d-s$ entries lie in $\R \backslash [-1,1]$.
Set $\tilde{f}(x) = Q^{-1}f(z+Qx)-Q^{-1}z$.
Given $\epsilon \in (0,\infty)$, set $\delta(\epsilon) = \epsilon / (\norm{Q}\norm{Q^{-1}})$ and $\tilde{r}_{\epsilon} = r_{\delta(\epsilon)}/\norm{Q}$, where $r_{\delta}$ are the radii from Assumption \ref{center_manifold_asmp}.
Then, $\tilde{f}$ and $\tilde{r}_{\epsilon}$ satisfy Assumption \ref{center_manifold_asmp} at the point $0$.
Indeed, if $x,y \in \B_{\tilde{r}_{\epsilon}}(0)$, then $z+Qx,z+Qy \in \B_{r_{\delta(\epsilon)}}(z)$ and
\begin{equation*}
\begin{split}
	&\norm{ \tilde{f}(x) - \tilde{f}'(0) x - \big( \tilde{f}(y) - \tilde{f}'(0) y \big) } \\
	&= \norm{ Q^{-1} \big[ f(z+Qx) - z - f'(z)(z+Qx-z) - \big( f(z+Qy) - z - f'(z)(z+Qy-z) \big) \big] } \\
	&\leq \norm{ Q^{-1} } \delta(\epsilon) \norm{z+Qx-(z+Qy)}
	\leq \epsilon.
\end{split}
\end{equation*}%
By the theorem for the diagonal case, there exist an $\tilde{r} \in (0,\infty)$ and a Lipschitz continuous map $\tilde{\CSMmap} \colon \tilde{E}^{cs} \rightarrow \tilde{E}^u$ such that $\{x \in \R^d \colon \tilde{f}^k(x) \in \B_{\tilde{r}}(0) \text{ for all } k \in \N_0\} \subseteq \mathrm{Graph}(\tilde{\CSMmap})$.
Note that $E^{cs}_z = Q\tilde{E}^{cs}$ and $E^u_z = Q\tilde{E}^u$.
Now, set $r = \tilde{r}/\norm{Q^{-1}}$.
Observe that $\tilde{f}^k(x) = Q^{-1}f^k(z+Qx)-Q^{-1}z$ for all $x \in \R^d$ and $k \in \N_0$.
In particular, if $f^k(x) \in \B_{r}(z)$, then $\tilde{f}^k(Q^{-1}(x-z)) \in \B_{\tilde{r}}(0)$.
Thus, if $y \in \{x \in \R^d \colon f^k(x) \in \B_{r}(z) \text{ for all } k \in \N_0\}$, then $Q^{-1}(y-z) \in \mathrm{Graph}(\tilde{\CSMmap})$ and, hence, $y \in Q(\mathrm{Graph}(\tilde{\CSMmap}))+z$.
Define $\CSMmap \colon E^{cs}_z \rightarrow E^u_z$ by
\begin{equation*}
	\CSMmap(x) = Q\tilde{\CSMmap}(Q^{-1}(x-\Pi^+(z)))+\Pi^-(z),
\end{equation*}%
where $\Pi^+ \colon \R^d \rightarrow E^{cs}_z$ and $\Pi^- \colon \R^d \rightarrow E^u_z$ are the projections given by $\Pi^{\pm}(x) = Q \tilde{\Pi}^{\pm}(Q^{-1}x)$.
Then, $Q(\mathrm{Graph}(\tilde{\CSMmap}))+z = \mathrm{Graph}(\CSMmap)$, which finishes the proof of Theorem \ref{center_manifold_thrm}.


\newpage%
\phantomsection%
\addcontentsline{toc}{section}{\protect Acknowledgments}%
\vskip 2em \noindent{\large\bfseries Acknowledgments}
\vskip 2ex \noindent
This work has been partially funded by an Eric and Wendy Schmidt AI in Science Postdoctoral Fellowship.
This work has also been partially funded by the European Union (ERC, MONTECARLO, 101045811).
The views and the opinions expressed in this work are however those of the authors only and do not necessarily reflect those of the European Union or the European Research Council.
Neither the European Union nor the granting authority can be held responsible for them.
In addition, the second author gratefully acknowledges the Cluster of Excellence EXC 2044-390685587, Mathematics M{\"u}nster: Dynamics-Geometry-Structure funded by the Deutsche Forschungsgemeinschaft (DFG, German Research Foundation).

\phantomsection%
\addcontentsline{toc}{section}{\protect References}%
\bibliographystyle{acm}%
\bibliography{bibfile_FR_2024_sep_11}

\end{document}